\newcommand{\bluetext}[1]{\textcolor{NavyBlue}{#1}}
\definecolor{darkpurple}{HTML}{660066}
\definecolor{figred}{HTML}{B85450}
\definecolor{figgreen}{HTML}{82B366}
\definecolor{figgray}{HTML}{808080}
\renewcommand{\d}{\mathop{}\!d}
\newcommand{\indep}{\perp \!\!\! \perp}
\newcommand*\circledgreen[1]{%
\tikz[baseline=(char.base)]{
  \node[shape=circle, draw=ForestGreen!60, fill=ForestGreen!10, thick, inner sep=1pt] (char) {\scriptsize\textsf{#1}};
}}
\newcommand{\method}{SurvB-learner\xspace}
\algrenewcommand\algorithmicforall{\textbf{for all}}
\renewcommand{\ALG@step}{\addtocounter{ALG@line}{1}\arabic{ALG@line}}
\algnewcommand{\Input}{\item[\textbf{Input:}]}
\theoremstyle{plain}
\newtheorem{theorem}{Theorem}[section]
\newtheorem{proposition}[theorem]{Proposition}
\newtheorem{lemma}[theorem]{Lemma}
\newtheorem{corollary}[theorem]{Corollary}
\theoremstyle{definition}
\newtheorem{definition}[theorem]{Definition}
\newtheorem{assumption}[theorem]{Assumption}
\theoremstyle{remark}
\definecolor{darkpurple}{HTML}{660066}
\definecolor{figred}{HTML}{B85450}
\definecolor{figgreen}{HTML}{82B366}
\definecolor{figgray}{HTML}{808080}
\definecolor{darkgreen}{rgb}{0.0, 0.5, 0.0}
\definecolor{lightyellow}{HTML}{FFE699}
\definecolor{red_revision}{HTML}{FF0000}
\definecolor{darkblue}{HTML}{2E6EB3}
\definecolor{derkgreen}{HTML}{3E7D00}
\definecolor{darkyellow}{HTML}{D99542}
\definecolor{darkpurple}{HTML}{660066}
\def\eqref#1{equation~\ref{#1}}
\def\1{\bm{1}}
\def\rmE{{\mathbf{E}}}
\DeclareMathAlphabet{\mathsfit}{\encodingdefault}{\sfdefault}{m}{sl}
\SetMathAlphabet{\mathsfit}{bold}{\encodingdefault}{\sfdefault}{bx}{n}
\def\gA{{\mathcal{A}}}
\def\gC{{\mathcal{C}}}
\def\gD{{\mathcal{D}}}
\def\gH{{\mathcal{H}}}
\def\gM{{\mathcal{M}}}
\def\gQ{{\mathcal{Q}}}
\def\gS{{\mathcal{S}}}
\def\gT{{\mathcal{T}}}
\def\gX{{\mathcal{X}}}
\def\sN{{\mathbb{N}}}
\def\sP{{\mathbb{P}}}
\def\sR{{\mathbb{R}}}
\newcommand{\E}{\mathbb{E}}
\title{Assessing the robustness of heterogeneous treatment effects in survival analysis under informative censoring}
\author{%
  \textbf{Yuxin Wang}\thanks{Correspondence to: \texttt{Yuxin.Wang1@lmu.de}}, \, \textbf{Dennis Frauen}, \textbf{Jonas Schweisthal}, \textbf{Maresa Schröder},
  \textbf{Stefan Feuerriegel}\\
  LMU Munich \\
  Munich Center of Machine Learning (MCML)
}
\begin{document}

\maketitle

\begin{abstract}
    Dropout is common in clinical studies, with up to half of patients leaving early due to side effects or other reasons. When dropout is informative (i.e., dependent on survival time), it introduces censoring bias, because of which treatment effect estimates are also biased. In this paper, we propose an assumption-lean framework to assess the robustness of conditional average treatment effect (CATE) estimates in survival analysis when facing censoring bias. Unlike existing works that rely on strong assumptions, such as non-informative censoring, to obtain point estimation, we use partial identification to derive informative bounds on the CATE. Thereby, our framework helps to identify patient subgroups where treatment is effective despite informative censoring. We further propose a novel model-agnostic meta-learner, called \textbf{SurvB-learner}, to estimate the bounds that can be used in combination with arbitrary machine-learning models, and that has favorable theoretical properties such as double-robustness and quasi-oracle efficiency. We finally demonstrate the effectiveness of our meta-learner across various experiments using both simulated and real-world data.
\end{abstract}

\section{Introduction}

Dropout is widespread in survival analysis, especially in oncology. A recent systematic review found that nearly all cancer studies, both prospective and retrospective, reported dropout, with rates as high as 53\% in colorectal cancer and 43\% in non–small cell lung cancer (NSCLC) \citep{shand.2024}. Patients drop out for many reasons, including severe side effects, personal circumstances, or loss to follow-up \citep{fizazi.2017}. However, due to dropout, survival analyses can be biased~\citep{gupta.2025, templeton.2020}, making therapies appear more or less effective than they truly are. Here, we present a framework to assess the robustness of heterogeneous treatment effects in survival analysis under dropout.

When dropout is related to survival time (known as \emph{informative censoring}), it introduces systematic bias into survival analysis because event times such as disease-free or overall survival are censored in a non-random way~\citep{klein.2003, vanderlaan.2003}. We refer to this problem as \textit{censoring bias}. Such bias directly undermines the reliable estimation of treatment effects and is especially problematic when studying heterogeneous treatment effects, which aim to identify patient subgroups that respond differently to treatment. Because dropout mechanisms often vary across subgroups, for example, patients with advanced disease may leave due to rapid progression, while others may withdraw due to toxicity~\citep{fizazi.2017, hui.2013, perez-cruz.2018}, censoring bias can affect subgroup-specific estimates even more severely than population-wide effects.

In this paper, we propose an assumption-lean framework to assess the robustness of CATE estimates in survival analysis under informative censoring (see overview figure in Fig.~\ref{fig:method_overview} in Appendix~\ref{app:methods_overview}). When censoring depends on survival time, unbiased point estimation of the CATE is generally impossible. Rather than assuming censoring is independent of the survival time, we explicitly account for informative censoring and therefore reframe the task using \emph{partial identification}, which shifts the goal from estimating a single point value to constructing informative lower and upper bounds on the CATE. This allows clinicians to assess whether treatment benefits remain positive for certain subgroups even in the presence of censoring. To achieve this, we build upon the idea of worst-case survival bounds~\citep{basu.1982, slud.1983, zheng.1995}, incorporating censoring strength and domain knowledge (e.g., such as expected survival time after dropout from the trial) via sensitivity functions to construct informative bounds.

We further introduce a novel two-stage meta-learner called \textbf{\method} to efficiently estimate these bounds. Our \method is carefully designed to correct for the plug-in bias of na\"ive learners. Our \method is flexible (i.e., model-agnostic) and can be used together with arbitrary machine learning models. Further, it has favorable theoretical properties such as double robustness and quasi-oracle efficiency. Double robustness means that our estimator remains consistent even if one of the nuisance models (e.g., outcome or censoring) is misspecified, while quasi-oracle efficiency guarantees asymptotic performance close to that of an estimator with knowledge of the true nuisance models. Finally, beyond discrete treatments, we extend our framework to continuous treatments (Appendix~\ref{app:continuous_setting}) and further show that it is applicable to both randomized data from clinical trials as well as observational data from real-world studies with hidden confounding (Appendix~\ref{app:hidden_confounding}). 

In this paper, we make three \textbf{contributions:}\footnote{Code available at: \url{https://anonymous.4open.science/r/survival_bias-1636}.} 
(1) We propose an assumption-lean framework to audit censoring bias in the CATE estimates from a censored dataset. Our method replaces the non-informative censoring assumption with sensitivity functions that use censoring strength and domain knowledge (e.g., expected survival after dropout) to form informative bounds.
(2) We further introduce a model-agnostic meta-learner called \textbf{SurvB-learner} to efficiently estimate bounds.
(3) We provide theoretical results for our meta-learners by showing consistency, double robustness, and quasi-oracle efficiency properties. Finally, we confirm the effectiveness of our meta-learners by performing various experiments using both synthetic and real-world data.

\section{Related work}
\label{sec:related_work}

We review the main related literature streams on causal inference with survival outcomes. We refer to Appendix~\ref{app:extended_related_work} for details.

\textbf{CATE estimation under non-informative censoring assumption:} Standard methods for survival analysis (e.g., the Kaplan-Meier estimator~\citep{kaplan.1958}, the Cox proportional hazards model~\citep{breslow.1975, cox.1972}, and the accelerated failure time model~\citep{cox.1972, wei.1992}) rely on the assumption of \textit{non-informative censoring}, meaning that dropout time is independent of survival time. Existing methods for estimating heterogeneous treatment effects from survival data (e.g., including Cox-based models \citep{gao.2022, bo.2025}, tree-based models \citep[e.g.,][]{cui.2023, henderson.2020, tabib.2020, zhang.2017}, neural network-based methods \citep[e.g.,][]{katzman.2018, schrod.2022}. and orthogonal meta-learners tailored to censored outcomes~\citep{curth.2021, frauen.2025, xu.2024, xu.2022}) all rely on the assumption of non-informative censoring. $\Rightarrow$ \emph{When the non-informative censoring assumption is violated, estimates obtained from the previous methods will be biased.}

\textbf{Survival methods for informative censoring:} Several methods in survival analysis address the problem of \emph{dependent} censoring. Early works~\citep{basu.1982, slud.1983, zheng.1995} introduced worst-case bounds, which estimate survival functions by assuming that each censored individual experiences the event immediately after their censoring time. While these studies focused on bounding the survival function itself, more recent work has shifted to partially-identify covariate effects on survival time under dependent censoring~\citep{sakaguchi.2024, willems.2025}. $\Rightarrow$ \emph{These studies remain strictly within the survival analysis framework, without any causal inference task.}

\textbf{Causal survival methods for informative censoring:} While the above works address dependent censoring purely within a survival analysis framework, a parallel line of research tackles informative censoring in causal inference settings. Several methods have been proposed to address censoring bias when estimating the average treatment effect (ATE) from censored data \citep{bai.2025, mao.2018, rubinstein.2025, schaubel.2011, voinot.2025a}. In contrast, clinicians and researchers currently lack practical tools for assessing the robustness of heterogeneous treatment effect estimates when informative censoring is present. $\Rightarrow$ \emph{None of these are directly applicable to CATE estimation.}

\textbf{Research gap:} So far, existing heterogeneous treatment effect learners fail to account for informative censoring. To the best of our knowledge, we are thus the first to provide a partial identification tool that includes the censoring bias to ensure a robust CATE estimation.

\section{Problem setup}
\label{sec:problem_setup}

\textbf{Data:} We consider the standard setting for estimating CATEs based on censored, time-to-event data\footnote{We deal with the problem of right censoring, which is common in survival analysis settings. We thus assume that events have not happened before time $t=0$.}~\citep{cui.2023, curth.2021b, frauen.2025, vanderlaan.2003, zhang.2017}. That is, we consider the population $(X, A, T, C) \sim \sP$, where $X \in \gX \subseteq \sR^p$ are observed covariates, $A \in \gA$ is the treatment\footnote{Our method is applicable to both discrete and continuous treatment settings. For simplicity, we focus on discrete treatments (i.e., $A \in \gA \subseteq \mathbb{N}$) in the main paper. We provide an extension to continuous treatments in Appendix~\ref{app:continuous_setting}.}, $T \in \gT \subseteq [0, t_{\mathrm{max}}] \subseteq \mathbb{R^+}$ is the event time of interest (e.g., the time of overall survival (OS) or the time with disease-free survival (DFS)), $C \in \gT$ is the censoring time (e.g., the time at which a patient drops out of the study). Importantly, $t_{\mathrm{max}}$ is typically known in medical applications: it is naturally provided by the theoretical maximum human lifespan in general survival analysis, while, in oncology, the maximum survival time is often limited by disease biology. For instance, patients with certain forms of advanced lung cancer rarely survive beyond a few months \citep{imai.2015}.

\begin{wrapfigure}[12]{r}{0.40\textwidth}
    \centering
    \vspace{-0.8cm}
    \includegraphics[width=\linewidth]{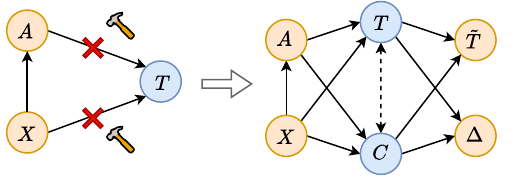}
    \vspace{-0.4cm}
    \caption{\textbf{Causal graph for survival data under informative censoring}. Variables in \textcolor{orange}{yellow} are observed, while variables in \bluetext{blue} are unobserved. Solid edges denote causal relationships, and dashed edges reflect dependence. Note that the absence of arrows encodes causal assumptions, not their presence.}
    \label{fig:causal_graph}
\end{wrapfigure}

The challenge with censored datasets is typically that $T$ is not fully observed, but only $\tilde{T} = \min \{ T, C \}$~\citep{breslow.1975, cox.1972, kaplan.1958}. Thus, one can not observe data from the full population $(X, A, T, C)$. Instead, we only observe a dataset $\gD = \left\{ \left( x_i, a_i, \tilde{t}_i, \delta_i \right)\right\}_{i=1}^n$ of size $n \in \sN$ sampled i.i.d. from the population $\mathbb{Z} = (X, A, \tilde{T}, \Delta )$, where $\Delta = \mathbbm{1}(C \leq T)$ is a censoring indicator for the event and the observed time is given by $\tilde{T} = \min \{ T, C \}$. The corresponding causal graph is shown in Fig.~\ref{fig:causal_graph}.

\textbf{Causal estimand:} We make use of the potential outcome framework \citep{rubin.1974} to formalize our causal inference task. Let $T(a) \in \gT$ denote the potential event time under treatment intervention $A = a$. We are interested in the CATE on survival time, i.e.,
\begin{equation}
\tau_{a_1, a_2}(x) = \tau_{a_1}(x) - \tau_{a_2}(x) = \mathbb{E}_{\mathbb{P}}\left[ T(a_1) - T(a_2) \mid X = x \right] .
\end{equation}
Hence, our estimand $\tau_{a_1, a_2}(x)$ measures the difference of conditional means between two treatments for a patient with covariates $X = x$. For completeness, we define the conditional average potential outcome (CAPO) for survival time via $\tau_{a}(x) = \mathbb{E}\left[ T(a) \mid X = x \right]$.

\textbf{Key definitions:} We define the \emph{propensity score} for treatment $a$ as $\pi_a(x) = \sP(A = a \mid X = x)$, for $a \in \gA$, which captures the treatment assignment mechanism. Further, we denote the \emph{censoring strength} as $\xi(x, a) = \sP (\Delta = 1 \mid X = x, A = a)$, which gives the probability that the event is censored, meaning that a patient does not complete the study based on covariates $X = x$ and treatment $A = a$. We further denote the \emph{expected survival time function} by $\mu(x, a) = \E [T \mid X = x, A = a]$ and the \emph{expected conditional survival time function} by $\nu(\delta, x, a) = \E [\tilde{T} \mid \Delta = \delta, X = x, A = a]$. Here, $\mu(x, a)$ is the expected survival time given covariates $X = x$ and treatment $A = a$, while $\nu(\delta, x, a)$ is the expected survival time further conditioned on the censoring indicator $\Delta = \delta$. Finally, we define the \emph{post-dropout survival function} $\gamma(x, a)$, which denotes the expected remaining survival time after dropout, given covariates $X = x$ and treatment $A = a$. Note that the post-dropout survival function serves as a sensitivity function in our method; we treat it as \emph{known} rather than estimating it from the observed data.

\textbf{Identifiability:} We make use of the following standard Assumption~\ref{ass:causal_survival}.

\begin{assumption} \label{ass:causal_survival}
For all $x\in \gX$, $a\in \gA$, it holds:
(i)~Consistency: $A = a \Rightarrow \tilde{T} = \min\{ T, C \} = \min \{ T(a), C(a) \}$;
(ii)~Treatment overlap: $0<\pi_a(x)<1$, $\forall x \in \mathcal{X}$, and $\forall a \in \mathcal{A}$;
(iii)~Unconfoundedness: $T(a) \indep A \,\mid\ X$;
(iv) Censoring overlap: $0\leq \xi(x, a) < 1$, $\forall x \in \mathcal{X}$, and $\forall a \in \mathcal{A}$.
\end{assumption}

Assumptions (i)--(iii) are standard in the causal inference literature and are widely used for estimating CATEs \citep{ candes.2023, curth.2021b, imbens.2004, rubin.1974, shalit.2017}. (i)~Consistency ensures that an individual's observed outcome under treatment $a$ equals their potential outcome $T(a)$; it also implies no interference between individuals. (ii)~Treatment overlap means that every patient has a positive probability of receiving each treatment, which thereby ensures sufficient support in the data. (iii)~Unconfoundedness assumes that there are no unobserved confounders, which implies that, in the uncensored subgroup, the expected conditional survival time function $\E[T(a) \mid X=x]$ is identifiable. In the following, we focus on clinical trials where unconfoundedness is ensured by design and real-world data without hidden confounding. We later also provide an extension to real-world data with hidden confounding in Appendix~\ref{app:hidden_confounding}. (iv)~Censoring overlap is additionally common in survival analysis~\citep{cai.2019, westling.2024} and ensures a positive probability of being uncensored (i.e., experiencing the event before censoring) every covariate value.

\subsection{Challenges from informative censoring}

Estimating heterogeneous treatment effects in survival analysis is challenging because standard assumptions required for valid causal inference are often violated. In particular, most of the causal survival literature~\citep{curth.2021b, frauen.2025, gao.2022,vanderlaan.2003, xu.2024, xu.2022} assumes non-informative censoring. While this allows for identification of the CATE, it rarely holds in medical practice. In such cases, censoring carries prognostic information, which, if ignored, leads to biased estimates. Below, we formalize the challenges from informative censoring and thereby motivate the need for our framework later. 

Why are the standard assumptions from the survival literature~\citep{ curth.2021b, frauen.2025, vanderlaan.2003} violated? These works typically make the assumption of non-informative censoring to identify the CATE $\tau_{a_1, a_2}(x)$ from censored data, which can be stated as 
\begin{equation}
\label{eq:non_informative_censoring}
T \indep C \mid X=x, A=a, \qquad \forall x\in \gX, a\in \gA
\end{equation}
This assumption requires that the censoring time is independent of the patient's survival time, conditional on covariates and treatment. Under this assumption, the expected conditional survival time function among the censored patients, i.e., $\mathbb{E} [ T \mid X = x, A = a, \Delta = 1]$, is identifiable. However, in practice, this assumption is often violated. For example, patients with a more severe disease (and thus shorter survival times $T$) may drop out earlier, for example, when transferred to palliative care or other facilities. Similarly, patients may be more likely to be lost to follow-up as death approaches, for instance, by withdrawing from the clinical trial \citep{hernan.2004, templeton.2020}. In these cases, the expected conditional potential survival function under censoring, $\mathbb{E} [ T(a)\mid X = x, \Delta = 1]$, is \emph{not} identified. Therefore, the causal estimand, the CATE, is also \emph{not} identified as stated in the following lemma.

\begin{lemma}\label{lem:total_prob_causal}
Let the \emph{expected conditional survival function} $\nu(\delta, x, a)$ and the \emph{censoring strength} $\xi(x, a)$ be defined as above. Under informative censoring, the true survival time is unobserved, and, therefore, the CATE is not identified; i.e.,
{\small{
\begin{align}
\label{eq:total_prob_causal}
\tau_{a_1, a_2}(x)
&= \nu(0, x, a_1)[1-\xi(x, a_1)] + \bluetext{\mathbb{E} [ T \mid X = x, A = a_1, \Delta = 1}]\xi(x, a_1) \nonumber\\
&- \nu(0, x, a_2)[1-\xi(x, a_2)] - \bluetext{\mathbb{E} [ T \mid X = x, A = a_2,  \Delta = 1]}\xi(x, a_2).
\end{align}
}}
\end{lemma}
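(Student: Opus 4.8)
The plan is to prove the identity as a direct application of the law of total expectation (tower property), splitting the population according to the censoring indicator $\Delta \in \{0,1\}$, and then to read off non-identifiability by inspecting which resulting terms are functionals of the observed law $\mathbb{Z} = (X, A, \tilde{T}, \Delta)$ and which are not. Concretely, I would first establish the decomposition for a single conditional average potential outcome (CAPO) $\tau_a(x) = \E[T(a) \mid X = x]$ and then form the difference $\tau_{a_1,a_2}(x) = \tau_{a_1}(x) - \tau_{a_2}(x)$ to obtain \eqref{eq:total_prob_causal}.

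First I would fix $a \in \gA$ and $x \in \gX$. Using unconfoundedness (Assumption~\ref{ass:causal_survival}(iii)) to anchor the potential outcome to the observed arm, I would pass from $\E[T(a) \mid X = x]$ to the fully conditional expectation $\E[T(a) \mid X = x, A = a]$, which makes the census quantities $\xi(x,a)$ and $\nu(\delta, x, a)$ directly applicable. I would then apply the tower property, conditioning on $\Delta$:
\begin{equation*}
\E[T(a) \mid X = x, A = a] = \sum_{\delta \in \{0,1\}} \E[T(a) \mid X = x, A = a, \Delta = \delta]\, \sP(\Delta = \delta \mid X = x, A = a),
\end{equation*}
where by definition of the censoring strength $\sP(\Delta = 1 \mid X = x, A = a) = \xi(x,a)$ and $\sP(\Delta = 0 \mid X = x, A = a) = 1 - \xi(x,a)$.

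Next I would treat the two strata separately. On the uncensored stratum $\Delta = 0$ the event time is observed, so consistency (Assumption~\ref{ass:causal_survival}(i)) gives $\E[T(a) \mid X = x, A = a, \Delta = 0] = \E[T \mid X = x, A = a, \Delta = 0] = \nu(0, x, a)$, a functional of the observed law. On the censored stratum $\Delta = 1$ the true event time $T$ is latent (only $\tilde{T} = C$ is recorded), so the factor $\E[T(a) \mid X = x, \Delta = 1]$ cannot be rewritten through the distribution of $\mathbb{Z}$; I would flag it as the \bluetext{unidentified} term. Substituting yields $\tau_a(x) = \nu(0, x, a)[1 - \xi(x,a)] + \bluetext{\E[T(a) \mid X = x, \Delta = 1]}\,\xi(x,a)$, and subtracting the analogous expansion for $a_2$ from that for $a_1$ reproduces \eqref{eq:total_prob_causal} exactly.

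The arithmetic is just the tower property, so the step I would treat most carefully — and the only genuine subtlety — is the potential-outcome bookkeeping: justifying $\E[T(a) \mid X = x] = \E[T(a) \mid X = x, A = a]$ via unconfoundedness, the reduction to observed $T$ on the uncensored stratum via consistency, and being explicit that none of the conditions in Assumption~\ref{ass:causal_survival} pins down the censored-stratum expectation. Making clear that this last term is precisely where informative censoring enters, and that absent a condition such as the non-informative-censoring assumption in \eqref{eq:non_informative_censoring} it remains unconstrained, is what converts the algebraic decomposition into the stated non-identifiability result and motivates the partial-identification strategy that follows.
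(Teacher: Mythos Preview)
Your proposal is correct and follows essentially the same approach as the paper: apply unconfoundedness to pass from $\E[T(a)\mid X=x]$ to $\E[T\mid X=x,A=a]$, decompose via the law of total expectation on $\Delta$, identify the $\Delta=0$ piece as $\nu(0,x,a)$ and leave the $\Delta=1$ piece unidentified, then take the difference across $a_1,a_2$. Your write-up is in fact more explicit than the paper's about which assumption justifies which step.
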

\vspace{-0.5cm}
\begin{proof}
See Appendix~\ref{app:proof_lemma}.
\end{proof}
\vspace{-0.4cm}

Because these outcomes for the equations in \bluetext{blue color} are never observed by definition, they cannot be recovered from the data without further assumptions. Hence, even with an infinitely large sample, the true values of these expectations, and therefore of the CATE, cannot be determined from the observed data distribution alone. As a result, unbiased point estimation of the CATE is impossible under informative censoring. 

Therefore, we relax the assumption of non-informative censoring from Eq.~\ref{eq:non_informative_censoring} and adopt the approach of \emph{partial identification}: instead of targeting a single point estimate, we aim to construct informative lower and upper bounds on the CATE. To achieve this, we build on the idea of worst-case survival bounds~\citep{basu.1982, slud.1983, zheng.1995}, but where we incorporate both censoring strength and domain knowledge (e.g., expected survival time after dropout) via sensitivity functions. This approach is beneficial in practice because we explicitly account for censoring bias while still providing clinically meaningful insights. For instance, the resulting bounds allow clinicians to determine whether the effect of treatment is positive for certain subgroups even in the presence of censoring, which implies a \emph{robust benefit} from treatment.  

\vspace{-0.2cm}
\section{Our approach: partial identification of the CATE under informative censoring}
\label{sec:propose_bounds}
\vspace{-0.2cm}

To address the above challenges, we now move away from point estimation to partial identification of the CATE, which allows us to obtain informative bounds in the presence of informative censoring. To do so, we first notice that, as stated in Lemma~\ref{lem:total_prob_causal}, the key challenge for estimating the CATE lies in the expected conditional potential survival function under censoring, i.e., $\bluetext{\mathbb{E} [ T(a) \mid X = x, \Delta = 1]}$. As a remedy, our approach is therefore to first derive partial identification bounds for this quantity, which then yield bounds for CAPO $\tau_a(x)$. We can then naturally obtain bounds for CATE $\tau_{a_1, a_2}(x)$, as the CATE is defined as the difference between CAPOs, i.e., $\tau_{a_1}(x)$ and $\tau_{a_2}(x)$. Our bounds, which we derive below, build on Manski bounds~\citep{manski.1990}, but we carefully adapt to our survival setting. In that sense, our bounds follow the idea of worst-case bounds~\citep{basu.1982, slud.1983, zheng.1995} in survival analysis. Importantly, unlike standard Manski bounds~\citep{manski.1990}, which assume the only source of missing data is treatment assignment, our survival setting introduces an additional source of partial observability through informative censoring, which makes partial identification in our setting non-trivial. 

Next, we formally define the lower and upper bounds for the CAPO, which we denote by $\mu^-(x, a)$ and $\mu^+(x, a)$, respectively. The bounds thus capture the range of plausible values under our partial identification framework while allowing for informative censoring.

\textbf{Lower bound}: To construct a lower bound for the CAPO, we leverage the definition of $\tilde{T} = \min \{C, T\}$. By design, we have $T \geq\tilde{T}$ and $\bluetext{\mathbb{E} [ T(a) \mid X = x, \Delta = 1]} \geq \mathbb{E} [ \tilde{T} \mid X = x, A=a, \Delta = 1]$ when conditioned on the censored events (i.e., $\Delta = 1$). We then replace $T$ with $\tilde{T}$ in Eq.~(\ref{eq:total_prob_causal}) and obtain a lower bound  $\mu^{-}(x, a)$ as follows:
\begin{align}
\label{eq:lower_bound}
    \mu(x,a) &\geq \nu(0, x, a)[1-\xi(x, a)] 
     + \mathbb{E} [ \tilde{T} \mid \Delta = 1, X = x, A=a]\xi(x, a) \nonumber\\
     &= \mathbb{E} [ \tilde{T} \mid X = x, A=a] :=  \mu^{-}(x, a).
\end{align}

\textbf{Upper bound}: To construct an upper bound for the CAPO, we introduce $\gamma(x, a)$ as a sensitivity function, stemming from domain knowledge on the maximum survival time after dropout to construct informative bounds. Unlike many other applications of partial identification, our sensitivity function has a natural interpretation: it simply captures the maximum possible expected survival time a patient may have after censoring (we discuss how to specify this in clinical practice below).

Formally, we consider sensitivity function $\gamma(x, a)$ which satisfies
\begin{align}
\label{eq:range_of_gamma}
&\E[T - \tilde{T}\mid X = x, A=a, \Delta = 1] \leq \gamma(x, a) \leq t_{\mathrm{max}} - \underbrace{\E [\tilde{T} \mid X = x, A=a, \Delta = 1]}_{:= \nu(1, x, a)},
\end{align}
\vspace{-0.4cm}

for all $x \in \mathcal{X}$ and $a \in \mathcal{A}$. The left-hand side $\E[T - \tilde{T}\mid X = x, A=a, \Delta = 1]$ reflects the expected remaining survival time beyond the observed follow-up, while the right-hand site $ t_{\mathrm{max}} - \nu(1, x, a)$ reflects the difference between the theoretical maximum lifespan and the observed follow-up. Depending on the setting, $\gamma(x, a)$ can incorporate domain knowledge (when specific information about post-dropout survival is available) or be set to $t_\mathrm{max} - \nu(1, x, a)$ in the absence of such information. Importantly, the latter ensures that we can still obtain informative bounds even when domain knowledge is not available.

This setting is realistic in many clinical and oncology applications, where prior information about post-dropout survival can be obtained from historical trials~\citep{pecci.2025}, disease staging systems~\citep{detterbeck.2017}, and established clinical benchmarks~\citep{yin.2022}, such as 3-year DFS or 5-year OS. For example, in advanced lung cancer or pancreatic cancer, survival beyond a few months or years is exceedingly rare~\citep{hu.2023,inoue.2025,pecci.2025}. In this sense, $\gamma(x,a)$ serves as a sensitivity function: a user-specified input that encodes clinically reasonable assumptions about post-dropout survival and allows researchers to assess how conclusions change under those assumptions. Importantly, $\gamma(x,a)$ may vary across individuals with different covariates, such as baseline age, lifestyle factors, or comorbidities, thereby allowing patient-specific heterogeneity to be incorporated.

Depending on the availability of domain knowledge, we consider two ways of specifying upper bounds on the CAPOs. \textbf{Case}~\circledgreen{1}: when clinical knowledge about post-dropout survival is available for different covariates and treatments, one can set $\gamma(x, a)$ to a clinically plausible maximum post-dropout survival time and use it to construct an upper bound on the CAPOs. \textbf{Case}~\circledgreen{2}: when such domain knowledge is unavailable, one can instead set $\gamma(x,a)=t_{\mathrm{max}}-\nu(1,x,a)$ to obtain a conservative upper bound on the CAPOs. This choice makes no strong clinical assumptions about post-dropout survival while ensuring that the resulting bounds remain informative.

\noindent\textbf{Upper bounds for post-dropout survival.} We consider two cases:

$\bullet$ \textbf{Case} \circledgreen{1} \textbf{(domain-knowledge upper bounds):} In the first case, when domain knowledge about $\gamma(x, a)$ is available, this information can be directly used as a sensitivity function to construct informative upper bounds for the CAPO. For example, in some lung cancer trials~\citep{imai.2015}, patients typically survive no longer than two to five months after dropping out, which suggests setting $\gamma(x,a)=3$. The resulting upper bound is
\begin{align}
\label{eq:upper_gamma_bound}
\mu(x,a) 
&\leq \nu(0,x,a)[1-\xi(x,a)] + \nu(1,x,a)\xi(x,a) + \gamma(x,a)\xi(x,a) := \mu^{+}(x,a),
\end{align}
which is expressed as a weighted combination of the expected conditional survival time without censoring $\nu(0,x,a)$, the expected conditional survival time with censoring $\nu(1,x,a)$, and the post-dropout survival captured by $\gamma(x,a)$.

$\bullet$ \textbf{Case} \circledgreen{2} \textbf{(conservative upper bounds):} 
In the second case, domain knowledge about post-dropout survival is unavailable—for example, due to relocation, withdrawal of consent, or complete loss to follow-up. In such situations, a conservative sensitivity function is used. Specifically, we take the upper bound of $\gamma(x,a)$ and set
\vspace{-0.4cm}
\begin{align}
\gamma(x,a) = t_{\mathrm{max}} - \nu(1,x,a)
\end{align}
\vspace{-0.6cm}

where $t_{\mathrm{max}}$ is the maximum support of the distribution of $\tilde{T}$. Substituting this into Eq.~(\ref{eq:total_prob_causal}) yields the conservative upper bound
\vspace{-0.2cm}
\begin{align}
\label{eq:upper_non_informative_bound}
\mu(x,a)
&\leq \nu(0,x,a)[1-\xi(x,a)] + t_{\mathrm{max}}\xi(x,a) := \mu^{+}(x,a).
\end{align}
\vspace{-0.6cm}

\section{Property of the partial identification bounds for CATE}

We now present our main result: our derivation of the partial identification bounds for CATE $\tau_{a_1, a_2}^{-}(x)$ and $\tau_{a_1, a_2}^{+}(x)$, which characterize the range of the CATE under informative censoring.

\begin{theorem}\label{thm:bounds}
Under assumptions~\ref{ass:causal_survival}, the CATE is bounded via 
\begin{equation}
    \tau_{a_1, a_2}^{-}(x) \leq \tau_{a_1, a_2}(x) \leq \tau_{a_1, a_2}^{+}(x),
\end{equation}
where $\tau_{a_1, a_2}^{+}(x) = \mu^{+}(x, a_1) - \mu^{-}(x, a_2)$ and $\tau_{a_1, a_2}^{-}(x) = \mu^{-}(x, a_1) - \mu^{+}(x, a_2)$. Here, $\mu^{-}(x, a_1)$ and $\mu^{-}(x, a_2)$ are given by Eq.~(\ref{eq:lower_bound}), and $\mu^{+}(x, a_1)$ and $\mu^{+}(x, a_2)$ are given by Eq.~(\ref{eq:upper_gamma_bound}) or Eq.~(\ref{eq:upper_non_informative_bound}) depending on the choice of $\gamma(x, a)$. 
\end{theorem}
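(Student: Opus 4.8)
The plan is to reduce the two-sided CATE bound to the one-sided bounds on the conditional average potential outcome (CAPO) that are already in hand, and then combine them through the identity $\tau_{a_1,a_2}(x)=\tau_{a_1}(x)-\tau_{a_2}(x)$ together with elementary monotonicity of the difference. The substantive work — constructing $\mu^{-}$ and $\mu^{+}$ in Eqs.~(\ref{eq:lower_bound}), (\ref{eq:upper_gamma_bound}), and (\ref{eq:upper_non_informative_bound}) — is already done, so the theorem is essentially an assembly step and I expect no genuine difficulty.

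First I would record the causal identification of each CAPO. Using consistency and unconfoundedness from Assumption~\ref{ass:causal_survival}(i),(iii), with treatment overlap (ii) guaranteeing that the conditioning event $\{A=a\}$ has positive probability, one obtains for $a\in\{a_1,a_2\}$
\begin{equation*}
\tau_a(x)=\E[T(a)\mid X=x]=\E[T(a)\mid X=x,A=a]=\E[T\mid X=x,A=a]=\mu(x,a).
\end{equation*}
Hence $\tau_{a_1,a_2}(x)=\mu(x,a_1)-\mu(x,a_2)$, which is exactly the quantity decomposed in Lemma~\ref{lem:total_prob_causal}.

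Next I would invoke the CAPO bounds. Censoring overlap (iv) ensures that $\nu(0,x,a)$ is well defined and that the worst-case substitution $T\ge\tilde T$ underlying the lower bound is legitimate, so Eq.~(\ref{eq:lower_bound}) and Eq.~(\ref{eq:upper_gamma_bound}) (or Eq.~(\ref{eq:upper_non_informative_bound})) give $\mu^{-}(x,a)\le\mu(x,a)\le\mu^{+}(x,a)$ for each $a\in\{a_1,a_2\}$. Applying this to the minuend $a_1$ and the subtrahend $a_2$, and using that a difference is largest when its first term is largest and its second term smallest (and smallest in the reverse case), yields
\begin{equation*}
\mu^{-}(x,a_1)-\mu^{+}(x,a_2)\le\mu(x,a_1)-\mu(x,a_2)\le\mu^{+}(x,a_1)-\mu^{-}(x,a_2),
\end{equation*}
and the outer terms are precisely $\tau^{-}_{a_1,a_2}(x)$ and $\tau^{+}_{a_1,a_2}(x)$ by their definitions in the statement.

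Since the only nontrivial ingredients are the one-sided CAPO bounds, which are already established, the main point requiring care is the \emph{pairing} of the bounds: the upper CATE bound must combine the upper bound on $\mu(x,a_1)$ with the \emph{lower} bound on $\mu(x,a_2)$, and symmetrically for the lower CATE bound, because of the sign reversal on the subtracted CAPO. The one place an assumption is genuinely used for $\mu^{+}$ is Case~\circledgreen{1}, where the user-supplied $\gamma(x,a)$ must satisfy the left inequality of Eq.~(\ref{eq:range_of_gamma}) for the upper bound to be valid; in Case~\circledgreen{2} the choice $\gamma(x,a)=t_{\mathrm{max}}-\nu(1,x,a)$ satisfies this automatically because $T\le t_{\mathrm{max}}$. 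I would close by noting that both inequalities hold pointwise in $x$, so the stated bound holds for all $x\in\gX$.
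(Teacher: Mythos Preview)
Your proposal is correct and follows essentially the same route as the paper: establish $\mu^{-}(x,a)\le\mu(x,a)\le\mu^{+}(x,a)$ for each treatment arm and then combine the two CAPO bounds through the difference $\tau_{a_1,a_2}(x)=\mu(x,a_1)-\mu(x,a_2)$ with the appropriate upper/lower pairing. If anything, you are slightly more explicit than the paper in spelling out where Assumptions~\ref{ass:causal_survival}(i)--(iv) enter and in noting the role of the left inequality in Eq.~(\ref{eq:range_of_gamma}) for Case~\circledgreen{1}.
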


The width of the bounds is an important property, as tighter bounds convey more information. Since the bounds on the CATE are derived from those on the CAPO, we study the theoretical properties of the CAPO bounds. In the following proposition, we show that the tightness of the bounds is determined by the censoring strength in the data and the post-dropout survival time function.

\begin{proposition}
\label{prop:width_of_bounds}
Let $\mu^{-}(x, a)$ and $\mu^{+}(x, a)$ be the lower bound and upper bound of the CAPO. The width of the bound for the CAPO is then 
\begin{equation}
\mu^{+}(x, a) - \mu^{-}(x, a) = \gamma(x, a)\xi(x, a),
\end{equation}
where $\gamma(x, a)$ is the known sensitivity function, and where the censoring strength $\xi(x, a)$ is the probability of patients being censored given covariates and treatments. The bounds are informative under either of the following conditions: (a) when strong domain knowledge removes uncertainty about post-dropout survival; or (b)~when $\xi(x, a) = 0$, which corresponds to the absence of censoring.
\end{proposition}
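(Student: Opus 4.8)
The plan is to establish the identity by a direct subtraction of the two closed-form bounds and then read off the two shrinkage regimes. First I would recall the explicit forms derived above: the lower bound $\mu^{-}(x,a) = \nu(0,x,a)[1-\xi(x,a)] + \nu(1,x,a)\xi(x,a)$ from Eq.~(\ref{eq:lower_bound}), where I use $\nu(1,x,a) = \E[\tilde{T}\mid \Delta=1, X=x, A=a]$ and the total-expectation identity $\mu^{-}(x,a) = \E[\tilde{T}\mid X=x, A=a]$, and the domain-knowledge upper bound $\mu^{+}(x,a) = \nu(0,x,a)[1-\xi(x,a)] + \nu(1,x,a)\xi(x,a) + \gamma(x,a)\xi(x,a)$ from Eq.~(\ref{eq:upper_gamma_bound}). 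Subtracting the former from the latter, the two terms $\nu(0,x,a)[1-\xi(x,a)]$ and $\nu(1,x,a)\xi(x,a)$ that are common to both bounds cancel exactly, leaving only the censoring-weighted post-dropout contribution and hence the stated width. The computation is elementary; the only care needed is to keep the $\Delta=0$ and $\Delta=1$ strata straight and to use the correct reading of $\nu(1,x,a)$ as the expected observed follow-up on the censored stratum.

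Second, I would verify that the same width holds under the conservative specification of the sensitivity function, since this is the one point at which the argument branches. In that case (Eq.~(\ref{eq:upper_non_informative_bound})) the upper bound reads $\mu^{+}(x,a) = \nu(0,x,a)[1-\xi(x,a)] + t_{\mathrm{max}}\,\xi(x,a)$; subtracting $\mu^{-}(x,a)$ again cancels the $\nu(0,x,a)[1-\xi(x,a)]$ term and returns the residual $[t_{\mathrm{max}} - \nu(1,x,a)]\,\xi(x,a)$, which coincides with the Case~1 residual under the substitution $\gamma(x,a) = t_{\mathrm{max}} - \nu(1,x,a)$. This confirms that the width formula is agnostic to which of the two $\gamma$-specifications is in force.

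Third, the two shrinkage conditions follow immediately once the width is written as a product of $\gamma(x,a)$ and $\xi(x,a)$. For condition (b), setting $\xi(x,a)=0$ makes the product vanish; this is the no-censoring regime, in which $\Delta=1$ carries zero probability mass and the unidentified conditional expectation appearing in Lemma~\ref{lem:total_prob_causal} is assigned no weight. For condition (a), I would invoke the admissibility constraint in Eq.~(\ref{eq:range_of_gamma}): because $0 \le \E[T-\tilde{T}\mid \Delta=1, X=x, A=a] \le \gamma(x,a)$, driving $\gamma(x,a)\to 0$ forces $T=\tilde{T}$ almost surely on the censored stratum, so that domain knowledge pins down post-dropout survival exactly and the width collapses to zero.

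The main obstacle is not the algebra, which reduces to a one-line cancellation, but the interpretive step in condition (a): one must argue that "strong domain knowledge" is precisely what collapses $\gamma(x,a)$ onto its lower admissible endpoint, and that this endpoint encodes zero residual survival beyond follow-up rather than a degenerate or vacuous constraint. Making this link rigorous relies on the two-sided bound of Eq.~(\ref{eq:range_of_gamma}) rather than on the width identity in isolation.
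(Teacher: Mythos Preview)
Your direct-subtraction strategy is exactly the device the paper uses, and your Case~\circledgreen{1} and Case~\circledgreen{2} computations are both algebraically correct. The difficulty is that what your subtraction actually produces is
\[
\mu^{+}(x,a)-\mu^{-}(x,a)=\gamma(x,a)\,\xi(x,a),
\]
not the $2\gamma(x,a)\xi(x,a)$ asserted in the proposition. You paper over this with the phrase ``hence the stated width,'' but the factor of~$2$ is never generated anywhere in your argument, and it cannot be: from Eq.~(\ref{eq:lower_bound}) and Eq.~(\ref{eq:upper_gamma_bound}) the single-arm CAPO width is exactly $\gamma(x,a)\xi(x,a)$.

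The paper's own proof resolves this by silently changing target: rather than computing the CAPO width, it writes out $\mu^{\pm}(x,a_1)$ and $\mu^{\pm}(x,a_2)$, forms the CATE bounds $\tau_{a_1,a_2}^{\pm}(x)=\mu^{\pm}(x,a_1)-\mu^{\mp}(x,a_2)$, and subtracts those, obtaining
\[
\tau_{a_1,a_2}^{+}(x)-\tau_{a_1,a_2}^{-}(x)=\gamma(x,a_1)\xi(x,a_1)+\gamma(x,a_2)\xi(x,a_2),
\]
so the ``$2$'' is really shorthand for the two arm-specific contributions. Your proposal misses this because you work only with a single treatment level $a$; to match the paper you would need to carry out the two-arm CATE subtraction and then note that the displayed $2\gamma\xi$ is the symmetric special case. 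Your shrinkage arguments for conditions~(a) and~(b) are fine and essentially identical to the paper's (which simply reads them off the product form without the additional appeal to Eq.~(\ref{eq:range_of_gamma}) that you make).
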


\vspace{-0.4cm}
\begin{proof}
See Appendix~\ref{app:proof_width}.
\end{proof}
\vspace{-0.4cm}

The above proposition characterizes when our partial identification bounds become informative. Hence, our bounds are particularly well-suited for settings with low or moderate censoring: when censoring is low, the width of bounds remains tight even if $\gamma(x, a)$ is misspecified; when censoring is high, but domain knowledge is available to construct $\gamma(x, a)$, the bounds remain informative. Hence, this ensures that our framework provides valid and interpretable results under informative censoring, thereby helping clinicians identify patient subgroups that reliably benefit from treatment without relying on potentially biased survival models.

\section{\method: A meta-learner for estimating the bounds}
\vspace{-0.2cm}

We now develop a two-stage meta-learner, called \method, for estimating bounds from Theorem~\ref{thm:bounds}. For simplicity, we first derive the two-stage meta-learner for the CAPOs; the corresponding bounds for the CATE can be obtained directly by taking the difference between two CAPOs (see Appendix~\ref{app:survb-learner_cate} for details). Importantly, our two-stage meta-learner is flexible and can be instantiated with arbitrary machine learning methods (including, e.g., linear models or neural networks).

\textbf{Why the simple ``plug-in'' learner is problematic:} A straightforward approach to estimating the bounds is the \emph{plug-in learner}. In this approach, one first estimates the nuisance functions $\hat{\nu}(\delta, x, a)$, $\hat{\xi}(x, a)$, and $\hat{\pi}_a(x)$, which correspond to ${\nu}(\delta, x, a)$, ${\xi}(x, a)$, and $\pi_a(x)$ for all $\delta \in \{0, 1\}$, $a \in \mathcal{A}$, $x \in \mathcal{X}$. Then, one can directly ``plug in'' the estimated nuisance functions into Eq.~(\ref{eq:lower_bound}), Eq.~(\ref{eq:upper_gamma_bound}), and Eq.~(\ref{eq:upper_non_informative_bound}) to obtain the bounds. However, while straightforward, this approach is known to suffer from plug-in bias \citep{kennedy.2023a}, which limits statistical efficiency and suboptimal estimation performance. 

To address the drawbacks of the plug-in learner, we propose a \emph{two-stage meta-learner}, which, as we show later, is doubly robust and quasi-oracle efficient. To achieve this, we borrow the general principles of two-stage learners~\citep{kunzel.2019, nie.2020}: first, to construct pseudo-outcomes that are, in expectation, equal to the target estimand; and, second, to use the pseudo-outcomes in a second-stage regression to estimate the target quantity, which, in our case, are $\mu^{\pm}(x, a)$. We now adapt it to our setting. 

\vspace{-0.2cm}
\subsection{Overview of the \method}
\vspace{-0.2cm}

At a high level, our \method proceeds in two stages (see the pseudocode in Algorithm~\ref{alg:pseudocode}). (1)~We estimate the nuisance functions with cross-fitting, where any suitable machine learning model can be applied, and construct pseudo-outcomes $\hat{\phi}^{\pm}(x, a)$. (2)~We regress the estimated pseudo-outcomes on the covariates $ X = x$, which yields a function to estimate the CATE bound. In the following, we show how the pseudo-outcomes are constructed (additional derivations are provided in Appendix~\ref{app:derivation_estimator}).

\textbf{Lower bound}: The pseudo-outcome for the lower bound is defined by
\vspace{-0.2cm}
{\footnotesize{
\begin{align}
\label{eq:lower_bound_learner}
    &\hat{\phi}^{-}(x, a) \\
    &= \frac{1(A = a, \Delta = 0)}{\hat{\pi}_a(x)}\{\tilde{T} - \hat{\nu}(0, x, a)\}+ \hat{\nu}(0, x, a)[1-\hat{\xi}(x, a)] + \frac{\hat{\nu}(0, x, a)1(A = a)}{\hat{\pi}_a(x)} \{1(\Delta = 0)- [1-\hat{\xi}(x, a)]\} \nonumber \\
    &+ \frac{1(A = a, \Delta = 1)}{\hat{\pi}_a(x)}\{\tilde{T} - \hat{\nu}(1, x, a)\} + \hat{\nu}(1, x, a)\hat{\xi}(x, a) + \frac{\hat{\nu}(1, x, a)1(A = a)}{\hat{\pi}_a(x)} \{1(\Delta = 1)- \hat{\xi}(x, a)\}. \nonumber
\end{align}
}}
\vspace{-0.5cm}

\textbf{Upper bound}: For the pseudo-outcome of the upper bound, we again distinguish two cases as above. These correspond to having a sensitivity function $\gamma(x, a)$ with domain knowledge in \textbf{Case}~\circledgreen{1}, and a conservative upper bound in \textbf{Case}~\circledgreen{2}.

$\bullet$ \textbf{Case} \circledgreen{1} (\emph{domain-knowledge upper bounds).} In this case, the upper bound for a given $\gamma(x, a)$ is stated in Eq.~(\ref{eq:upper_gamma_bound}). The corresponding pseudo-outcome is then given by
{\footnotesize{
\begin{align}
\label{eq:upper_gamma_bound_learner}
    &\hat{\phi}^{+}(x, a)\\
    &= \frac{1(A = a, \Delta = 0)}{\hat{\pi}_a(x)}\{\tilde{T} - \hat{\nu}(0, x, a)\}+ \hat{\nu}(0, x, a)[1-\hat{\xi}(x, a)]
    + \frac{\hat{\nu}(0, x, a)1(A = a)}{\hat{\pi}_a(x)} \{1(\Delta = 0)- [1-\hat{\xi}(x, a)]\} \nonumber\\
    &+ \frac{1(A = a, \Delta = 1)}{\hat{\pi}_a(x)}\{\tilde{T} - \hat{\nu}(1, x, a)\}+ \hat{\nu}(1, x, a)\hat{\xi}(x, a)
    + \frac{\hat{\nu}(1, x, a)1(A = a)}{\hat{\pi}_a(x)} \{1(\Delta = 1)- \hat{\xi}(x, a)\} \nonumber\\
    &+ \gamma(x, a) \frac{1(A = a)}{\hat{\pi}_a(x)} \left\{ 1(\Delta = 1) - \hat{\xi}(x, a)\right\}
    + \gamma(x, a) \hat{\xi}(x, a). \nonumber
\end{align}}}
\vspace{-0.5cm}

$\bullet$ \textbf{Case} \circledgreen{2} (\emph{conservative upper bounds).} Recall that the conservative upper bound is a special case of $\gamma(x, a) = t_{\mathrm{max}} - \nu(1, x, a)$, as defined in Eq.~(\ref{eq:upper_non_informative_bound}). The corresponding pseudo-outcome is given by
{\footnotesize{
\begin{align}
\label{eq:upper_non_informative_bound_learner}
    &\hat{\phi}^{+}(x, a)\\
    &= \frac{1(A = a, \Delta = 0)}{\hat{\pi}_a(x)}\{\tilde{T} - \hat{\nu}(0, x, a)\} + \hat{\nu}(0, x, a)[1-\hat{\xi}(x, a)]
    + \frac{\hat{\nu}(0, x, a)1(A = a)}{\hat{\pi}_a(x)} \{1(\Delta = 0)- [1-\hat{\xi}(x, a)]\} \nonumber\\
    &+ t_{\mathrm{max}} \frac{1(A = a)}{\hat{\pi}_a(x)} \left\{ 1(\Delta =1) - \hat{\xi}(x, a)\right\} + t_{\mathrm{max}}\hat{\xi}(x, a). \nonumber
\end{align}}}
\vspace{-0.4cm}
The full estimation procedure is stated in Algorithm~\ref{alg:pseudocode}.

\subsection{Theoretical properties}
\vspace{-0.2cm}

Below, we derive several favorable theoretical properties for our \method, including consistency, double robustness, and quasi-oracle efficiency. 

\begin{theorem}[Consistency and double robustness]\label{thm:DR-property}
    Let $\hat{\nu}(\delta, x, a)$, $\hat{\xi}(x, a)$, and $\hat{\pi}_a(x)$, denote the estimators of $\nu(\delta, x, a)$, $\xi(x, a)$, and $\pi_a(x)$, respectively. Then, for all $x \in \gX$, we have that $\E\left[\hat{\phi}^{+}(x, a) \mid X = x\right] = \mu^{+}(x, a)$ and $\E\left[\hat{\phi}^{-}(x, a) \mid X = x\right] = \mu^{-}(x, a)$ if at least one of the following conditions holds: 
    (1) $\hat{\pi}_a(x) = \pi_a(x)$, or
    (2) $\hat{\nu}(\delta, x, a) = \nu(\delta, x, a)$ and $\hat{\xi}(x, a) = \xi(x, a)$.
\end{theorem}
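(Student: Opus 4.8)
The plan is to treat each pseudo-outcome as a sum of augmented inverse-propensity-weighted (AIPW) \emph{blocks}, one per value of the censoring indicator $\delta \in \{0,1\}$, plus a single \emph{sensitivity block} carrying $\gamma(x,a)$ or $t_{\mathrm{max}}$ in the upper-bound cases, and then to prove one algebraic identity for a generic block that already displays the double-robustness structure. Concretely, I would first isolate the generic block
\begin{equation*}
B_\delta = \frac{1(A=a,\Delta=\delta)}{\hat\pi_a(x)}\bigl\{\tilde{T}-\hat\nu(\delta,x,a)\bigr\} + \frac{\hat\nu(\delta,x,a)\,1(A=a)}{\hat\pi_a(x)}\bigl\{1(\Delta=\delta)-\hat{w}_\delta\bigr\} + \hat\nu(\delta,x,a)\,\hat{w}_\delta,
\end{equation*}
where $\hat{w}_0 = 1-\hat\xi(x,a)$, $\hat{w}_1 = \hat\xi(x,a)$ (with true counterparts $w_0 = 1-\xi(x,a)$, $w_1 = \xi(x,a)$), and observe that $\hat\phi^{-}$ in Eq.~(\ref{eq:lower_bound_learner}) is exactly $B_0 + B_1$, that $\hat\phi^{+}$ in \textbf{Case}~\circledgreen{1} is $B_0+B_1$ plus the sensitivity block, and that $\hat\phi^{+}$ in \textbf{Case}~\circledgreen{2} is $B_0$ plus the $t_{\mathrm{max}}$ block. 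Reducing everything to $B_\delta$ means I only need to analyze one object.

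The core computation is to evaluate $\E[B_\delta \mid X=x]$ by iterated expectations. Conditioning on $(X,A)$ and invoking consistency (so $\tilde T = T$ on $\{\Delta=0\}$, while $\hat\nu(1,\cdot)$ targets the regression of the observed $\tilde T$ within the censored stratum), treatment overlap (so division by $\hat\pi_a$ is well defined), and $\sigma(X)$-measurability of $\hat\pi_a,\hat\xi,\hat\nu,\gamma$, the three moments I need are $\E[1(A=a,\Delta=\delta)\,\tilde T \mid X=x] = \pi_a(x)\,w_\delta\,\nu(\delta,x,a)$, $\E[1(A=a,\Delta=\delta)\mid X=x]=\pi_a(x)\,w_\delta$, and $\E[1(A=a)\mid X=x]=\pi_a(x)$. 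Writing $r = \pi_a(x)/\hat\pi_a(x)$ and substituting, the two cross terms $-r\,w_\delta\,\hat\nu$ and $+r\,\hat\nu\,w_\delta$ cancel, and the plan is to show that what survives is the clean form
\begin{equation*}
\E[B_\delta \mid X=x] = \nu(\delta,x,a)\,w_\delta + (r-1)\bigl(w_\delta\,\nu(\delta,x,a) - \hat{w}_\delta\,\hat\nu(\delta,x,a)\bigr).
\end{equation*}
The hard part is precisely this cancellation: one must verify that the first-order errors in $\hat\nu$ and in $\hat\xi$ enter the residual bias \emph{only} through the product $(r-1)\bigl(w_\delta\nu - \hat w_\delta\hat\nu\bigr)$, i.e.\ as a genuine product of a propensity-ratio error and an outcome/censoring error (Neyman orthogonality). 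This product structure is the crux, because the bias then vanishes whenever \emph{either} $r=1$ (condition~(1), $\hat\pi_a=\pi_a$) \emph{or} $w_\delta\nu=\hat w_\delta\hat\nu$, which holds under condition~(2) since $\hat\nu=\nu$ and $\hat\xi=\xi$ force $\hat w_\delta=w_\delta$.

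Once the block identity is in hand, the rest is bookkeeping. Summing the two blocks gives $\E[\hat\phi^{-}\mid X=x]=\nu(0,x,a)[1-\xi(x,a)]+\nu(1,x,a)\xi(x,a)=\mu^{-}(x,a)$ via Eq.~(\ref{eq:lower_bound}) under either condition. For the upper bound I would treat the sensitivity block $\gamma(x,a)\frac{1(A=a)}{\hat\pi_a(x)}\{1(\Delta=1)-\hat\xi(x,a)\} + \gamma(x,a)\hat\xi(x,a)$ as an inverse-propensity correction for the \emph{known} quantity $\gamma(x,a)\xi(x,a)$; the same moment calculation yields $\E[\cdot\mid X=x]=\gamma(x,a)\xi(x,a)+(r-1)\gamma(x,a)\bigl(\xi(x,a)-\hat\xi(x,a)\bigr)$, whose bias again vanishes under either condition (with $\gamma$ known, it plays the role of a correctly specified outcome nuisance). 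Adding this to $B_0+B_1$ recovers $\mu^{+}(x,a)$ from Eq.~(\ref{eq:upper_gamma_bound}), and the \textbf{Case}~\circledgreen{2} argument is identical with $\gamma(x,a)$ replaced by $t_{\mathrm{max}}$ and using Eq.~(\ref{eq:upper_non_informative_bound}). I would close by noting that this theorem is a pointwise, finite-sample algebraic identity for fixed nuisance functions, so no cross-fitting or asymptotic arguments enter here; those are reserved for the quasi-oracle efficiency result.
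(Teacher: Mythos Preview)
Your proposal is correct and the underlying computation matches the paper's, but your organization is cleaner than what the paper does. The paper's proof in Supplement~\ref{app:proof_DR_property} treats each of the three pseudo-outcomes $\hat\phi^{-}$, $\hat\phi^{+}$ (Case~\circledgreen{1}), $\hat\phi^{+}$ (Case~\circledgreen{2}) as a monolith: for each one it first collapses terms algebraically, then computes $\E[\,\cdot\mid X=x]$ to arrive at an expression of the form $\frac{\pi_a}{\hat\pi_a}(\cdots)-\frac{\pi_a}{\hat\pi_a}(\cdots)+(\cdots)$, and finally verifies the identity three separate times by plugging in (i) all nuisances correct, (ii) condition~(2), and (iii) condition~(1). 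Your block decomposition $B_\delta$ and the explicit bias formula $\E[B_\delta\mid X=x]=w_\delta\,\nu(\delta,x,a)+(r-1)\bigl(w_\delta\,\nu-\hat w_\delta\,\hat\nu\bigr)$ accomplish exactly the same thing but isolate the double-robustness mechanism in a single line, so the product-of-errors (Neyman-orthogonality) structure is visible rather than emerging implicitly from three substitution checks. This buys you less repetition and a presentation that transfers directly to the sensitivity block and to Case~\circledgreen{2}; the paper's approach buys nothing extra, it is simply the brute-force version of the same calculation.
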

\vspace{-0.4cm}
\begin{proof}
    See Appendix~\ref{app:proof_DR_property}.
\end{proof}
\vspace{-0.4cm}

Both consistency and double robustness are general properties of our \method, regardless of the underlying machine learning model. Consistency means that, under correct specification of the nuisance components, our estimator converges to the true target quantity as the sample size grows. Double robustness means that the bounds remain consistent if \underline{either} the propensity score function $\pi_a(x)$ is correctly specified, \underline{or} if the combination of the expected conditional survival time function $\nu(\delta, x, a)$ and the censoring strength $\xi(x, a)$ are correctly specified. Hence, our \method is robust against misspecification in one set of nuisance components. Importantly, the estimated CATE bounds also inherit double robustness, since they are constructed from the CAPO bounds. This property parallels the double robustness of the DR-learner in the standard CATE setting \citep{kennedy.2023}, but here we extend it to partial identification under informative censoring.

We further derive an asymptotic bound on the convergence rate of \method under standard smoothness assumptions. For this, we consider $s$-smooth functions from the Hölder class $\gH(s)$, which achieve Stone's minimax rate~\citep{stone.1980} of $n^{\frac{-2s}{2s+p}}$, where $p$ is the dimension of $\gX$.

\begin{assumption}[Smoothness]
\label{assumption:smoothness}
We assume that (1) the nuisance component $\nu(\delta,x, a)$ is $\alpha$-smooth, $\xi(x, a)$ is $\beta$-smooth, and $\pi_a(x)$ is $\zeta$-smooth; (2) all nuisance components are estimated with their respective minimax rate of $n^{\frac{-2k}{2k+p}}$, where $k\in \{\alpha, \beta, \zeta\}$; and (3) the oracle CAPO $\tau_a(\cdot)$ and the oracle CATE $\tau(\cdot)$ are $\eta$-smooth and that the initial CATE estimator $\hat{\tau}(x)$ converges with rate $r_{\tau}(n)$.
\end{assumption}

\begin{assumption}[Boundedness]
\label{assumption:boundedness}
We assume that there exist constants $C$, $\varepsilon$, $L > 0$ such that, for all $x\in \gX$, it holds that: (1) $\mid \nu(\delta, x, a)\mid \leq C$, (2) $\varepsilon< \hat{\pi}_a(x) < 1-\varepsilon$; and (3) $\mid \hat{\mu}^{\pm} (x, a) \mid \leq L$.
\end{assumption}

Assumptions~\ref{assumption:smoothness} and \ref{assumption:boundedness} are standard in the literature and in line with previous works on the theoretical analysis of CATE point estimators \citep{curth.2021, kennedy.2023} and estimators for partial identification bounds \citep{oprescu.2023}. Assumption~\ref{assumption:smoothness} provides a strategy to quantify the difficulty of the underlying nonparametric regression problems through smoothness conditions, while Assumption~\ref{assumption:boundedness} ensures that both the oracle bounds for the CATE and estimators are bounded. Based on this, we state the quasi-oracle efficiency of our \method.

\begin{theorem}[Quasi-oracle efficiency]
\label{thm:oracle-efficiency}
Let $\mathcal{D}_l$ for $l \in \{1, 2, 3\}$ be independent samples of size $n$. Let $\hat{\nu}(\delta, x, a)$, $\hat{\xi}(x, a)$ be trained on $\mathcal{D}_1$, and $\hat{\pi}_a(x)$ be trained on $\mathcal{D}_2$. We denote $\hat{\phi}^{\pm}(x, a)$ as the pseudo-outcome and $\hat{\mu}^{\pm}(x, a)=\hat{\E}_n[\hat{\phi}^{\pm}(x, a)\mid X=x]$ as the pseudo-outcome regression on $\mathcal{D}_3$ for some generic estimator $\hat{\E}_n[\cdot\mid X=x]$ of ${\E}_n[\cdot\mid X=x]$. Suppose that the second-stage estimator $\hat{\E}_n$ achieves the minimax rate $n^{\frac{-2\eta}{2\eta+p}}$ and satisfies the stability assumption from~\citet{kennedy.2023}. Under the assumptions of smoothness, the oracle risk has the following upper bound:
\begin{align}
\label{eq:oracle_risk}
&\E \left[(\hat{\mu}^{\pm}(x, a) - {\mu}^{\pm}_{a}(x))^2\right] \lesssim n^{\frac{-2\eta}{2\eta+p}} + n^{\left(\frac{-2\zeta}{2\zeta+p}-\frac{2\beta}{2\beta+p}\right)}+n^{\left(\frac{-2\zeta}{2\zeta+p} - \frac{2\alpha}{2\alpha+p}\right)}.
\end{align}
\end{theorem}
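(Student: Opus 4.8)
The plan is to follow the standard two-stage error decomposition for pseudo-outcome regression \citep{kennedy.2023}, adapted to our augmented survival pseudo-outcomes. Write $\phi^{\pm}(x,a)$ for the \emph{oracle} pseudo-outcome obtained by substituting the true nuisances $\nu,\xi,\pi_a$ into Eqs.~(\ref{eq:lower_bound_learner})--(\ref{eq:upper_non_informative_bound_learner}), and let $\tilde{\mu}^{\pm}(x,a)=\hat{\E}_n[\phi^{\pm}(x,a)\mid X=x]$ be the oracle second-stage regression. Writing the target CAPO bound as $\mu^{\pm}(x,a)$ (equivalently $\tau^{\pm}_a(x)$), I would split
\[
\hat{\mu}^{\pm}(x,a)-\mu^{\pm}(x,a)=\underbrace{\big(\tilde{\mu}^{\pm}(x,a)-\mu^{\pm}(x,a)\big)}_{\text{oracle error}}+\underbrace{\big(\hat{\mu}^{\pm}(x,a)-\tilde{\mu}^{\pm}(x,a)\big)}_{\text{plug-in excess}},
\]
so that, after squaring and taking expectations, the risk is controlled by the oracle risk plus the excess risk. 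The sample splitting into $\mathcal{D}_1,\mathcal{D}_2,\mathcal{D}_3$ is what makes this clean: since $\hat\nu,\hat\xi$ (from $\mathcal{D}_1$) and $\hat\pi_a$ (from $\mathcal{D}_2$) are independent of the regression sample $\mathcal{D}_3$, the stability assumption of \citet{kennedy.2023} lets me replace the excess-risk term, up to higher-order contributions, by the integrated squared \emph{conditional bias} $r^{\pm}(x)=\E[\hat\phi^{\pm}(x,a)-\phi^{\pm}(x,a)\mid X=x]$.

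For the oracle error I would simply invoke the assumption that $\hat{\E}_n$ attains the minimax rate over the $\eta$-smooth Hölder class and is stable; since the oracle regresses the conditionally unbiased $\phi^{\pm}$ whose conditional mean is the $\eta$-smooth target $\mu^{\pm}(\cdot,a)$ (Assumption~\ref{assumption:smoothness}(3)), its risk is $\lesssim n^{-2\eta/(2\eta+p)}$, giving the first term of Eq.~(\ref{eq:oracle_risk}).

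The crux is bounding $\E[r^{\pm}(X)^2]$. Here I would exploit the double-robustness structure established in Theorem~\ref{thm:DR-property}: because the score is Neyman-orthogonal, all first-order single-nuisance contributions cancel. A term-by-term expansion of Eqs.~(\ref{eq:lower_bound_learner})/(\ref{eq:upper_gamma_bound_learner}) confirms this explicitly --- in each of the $\delta=0$ and $\delta=1$ blocks the propensity reweighting combines with the $\xi$-correction so that the bilinear plug-in remainder $(\hat\nu-\nu)(\hat\xi-\xi)$ cancels, leaving $r^{\pm}(x)$ dominated pointwise by $|\hat\pi_a(x)-\pi_a(x)|\big(|\hat\nu(0,x,a)-\nu(0,x,a)|+|\hat\nu(1,x,a)-\nu(1,x,a)|+|\hat\xi(x,a)-\xi(x,a)|\big)$. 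Crucially no surviving $\nu$--$\xi$ cross-product remains, which is precisely why Eq.~(\ref{eq:oracle_risk}) contains a $\pi$--$\nu$ and a $\pi$--$\xi$ product but no $\nu$--$\xi$ product. Applying Cauchy--Schwarz together with the overlap and boundedness conditions of Assumption~\ref{assumption:boundedness} (so that $1/\hat\pi_a$ and the $\hat\nu$ factors are uniformly bounded), I obtain $\E[r^{\pm}(X)^2]\lesssim \|\hat\pi_a-\pi_a\|^2\big(\|\hat\nu-\nu\|^2+\|\hat\xi-\xi\|^2\big)$, and substituting the minimax rates of Assumption~\ref{assumption:smoothness}(2) yields the two product terms. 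For Case~\circledgreen{1} the extra $\gamma$-correction is handled identically: since $\gamma$ is a known sensitivity function, its bias enters only through the $\xi$-correction, reinforcing the existing $\pi$--$\xi$ product without adding a new rate.

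I expect the main obstacle to be the careful bookkeeping that \emph{every} first-order term vanishes in $r^{\pm}(x)$ for this augmented score, which simultaneously orthogonalizes against $\nu(0,\cdot)$, $\nu(1,\cdot)$, and $\xi$ and is structurally richer than the standard AIPW pseudo-outcome; in particular one must verify the cancellation of the bilinear $\nu$--$\xi$ remainder and control the passage from the pointwise product to the $L_2$ nuisance rates (i.e., that the relevant $L_4$ and $L_2$ rates agree up to constants for the Hölder estimators, as is standard). Finally, because the CATE bounds $\tau^{\pm}_{a_1,a_2}$ are differences of two CAPO bounds (Theorem~\ref{thm:bounds}), the same rate transfers to them by the triangle inequality, completing the argument.
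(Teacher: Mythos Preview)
Your proposal is correct and follows essentially the same approach as the paper: invoke Proposition~1 of \citet{kennedy.2023} to split the risk into the oracle second-stage rate plus the integrated squared conditional bias $\hat r(x,a)=\E[\hat\phi^{\pm}\mid X=x]-\mu^{\pm}(x,a)$, compute $\hat r$ explicitly to show it factors as $(\pi_a/\hat\pi_a-1)$ times a bounded combination of $(\hat\xi-\xi)$ and $(\hat\nu(\delta,\cdot)-\nu(\delta,\cdot))$, and then exploit the $\mathcal D_1/\mathcal D_2$ sample-splitting independence of $\hat\pi_a$ from $(\hat\nu,\hat\xi)$ to obtain the product of $L_2$ rates. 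The only minor deviation is that the paper keeps the inner coefficients $[1-\hat\xi]$ and $\hat\xi$ intact (bounding them by $1$) and factorises the expectation via independence rather than Cauchy--Schwarz, so the $L_4/L_2$ equivalence you flag as a potential obstacle never actually arises.
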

\vspace{-0.6cm}
\begin{proof}
    See Appendix~\ref{app:proof_quasi-orcale_efficiency}. In the proof, we even provide a more general bound, which depends on the pointwise mean-squared errors of the nuisance parameters estimators.
\end{proof}
\vspace{-0.2cm}

The above theorem gives an upper bound on the risk of the \method in comparison to the theoretical optimal convergence rate that would be achieved if knowing the oracle nuisance function. Hence, quasi-oracle efficiency means that the \method asymptotically achieves the same convergence rate as if the ground-truth nuisance functions were known. Eq.~(\ref{eq:oracle_risk}) illustrates this property by shown that, even if the propensity score is highly complex and its estimator converges slowly, our \method still converges quickly as long as the other nuisance estimators (i.e. the combination of expected conditional survival time function $\nu(\delta, x, a)$ and censoring strength $\xi(x, a)$) converge sufficiently fast. To derive the above bound, we leverage the cross-fitting approach from~\citet{kennedy.2023}, which was initially used to analyze the DR-learner for CATE estimation, where it allowed for the derivation of robust convergence rates. This technique has since been adapted to other meta-learners (e.g., \citep{curth.2021}), but, notably, not to partial identification in the causal survival setting.

\vspace{-0.4cm}
\section{Experiments}
\vspace{-0.4cm}

Further, we evaluate the effectiveness of the bounds generated by our SurvB-learner through experiments on synthetic datasets (we provide the results of real-world dataset in Appendix~\ref{app:adjuvant_results}). Synthetic datasets are commonly used to evaluate causal inference methods~\citep{vanderlaan.2003, frauen.2025, curth.2021b}, since they have the advantage of providing access to the ground-truth CATE and thus allow for direct comparison against oracle estimates. We additionally benchmark against standard \emph{point estimator} that assumes non-informative censoring in Appendix~\ref{app:drcut_comparison}.

\begin{wraptable}[19]{r}{0.45\textwidth}
    \centering
    \resizebox{\linewidth}{!}{%
    \begin{tabular}{lc cccc}
    \toprule
    \multirow{2}{*}{\centering Bound Type} & \multirow{2}{*}{\centering \makecell[c]{Censoring \\ strength}}& \multicolumn{2}{c}{\circledgreen{1} Domain knowledge} & 
    \multicolumn{2}{c}{\circledgreen{2} Conservative} \\
    \cmidrule(lr){3-4} \cmidrule(lr){5-6} 
    & & Plug-in learner & SurvB-learner & Plug-in learner & SurvB-learner  \\
    \midrule
    \multirow{3}{*}{Exponential function} 
    & 0.2
    & $3.244 \pm 0.152$ & $\mathbf{2.541 \pm 0.138}$ & $2.120 \pm 0.994$  & $\mathbf{1.477 \pm 1.006}$ \\
    & 0.4 
    & $4.265 \pm 0.129$ & $\mathbf{3.429 \pm 0.071}$ & $2.572 \pm 1.571$  & $\mathbf{2.006 \pm 1.396}$ \\
    & 0.6
    & $5.133 \pm 0.120$ & $\mathbf{3.995 \pm 0.066}$ & $2.969 \pm 2.008$  & $\mathbf{2.305 \pm 1.610}$ \\
    \midrule
    \multirow{3}{*}{Sin function} 
    & 0.2
    & $1.400 \pm 0.067$ & $\mathbf{0.799 \pm 0.020}$  & $1.104 \pm 0.140$ & $\mathbf{0.576 \pm 0.114}$ \\
    & 0.4 
    & $1.602 \pm 0.095$ & $\mathbf{1.064 \pm 0.010}$  & $1.127 \pm 0.240$ & $\mathbf{0.758 \pm 0.158}$ \\
    & 0.6
    & $1.782 \pm 0.082$ & $\mathbf{1.243 \pm 0.016}$  & $1.199 \pm 0.299$ & $\mathbf{0.868 \pm 0.177}$ \\
    \midrule
    \multirow{3}{*}{Logistic-sin function} 
    & 0.2
    & $1.646 \pm 0.052$ & $\mathbf{1.112 \pm 0.047}$ & $1.273 \pm 0.285$  & $\mathbf{0.747 \pm 0.306}$ \\
    & 0.4 
    & $2.060 \pm 0.092$ & $\mathbf{1.507 \pm 0.057}$ & $1.435 \pm 0.478$  & $\mathbf{1.003 \pm 0.399}$ \\
    & 0.6
    & $2.362 \pm 0.051$ & $\mathbf{1.739 \pm 0.025}$ & $1.552 \pm 0.635$  & $\mathbf{1.145 \pm 0.465}$ \\
    \bottomrule
    \end{tabular}
    }
    \vspace{-0.2cm}
    {\scriptsize \raggedright \textsuperscript{*} Smaller is better. Best value in bold.\par}
    \caption{\textbf{Results for experiments with synthetic data}. The experiments serve two purposes: (i) to show that our \method obtains values close to the oracle bounds, and (ii) to demonstrate that the plug-in learner is not robust and therefore inferior. Experimental details are in Appendix~\ref{app:implementation_detail}. We report the mean $\pm$ standard deviation of the RMSE compared to the oracle bound over 5 random runs for different synthetic datasets. Overall, we generate over 60 different datasets under varying scenarios. Our \method outperforms the plug-in learner by a clear margin.}
    \label{tab:syn_exp_results}
    \vspace{-0.5cm}
\end{wraptable}
\textbf{Data:} We simulate samples from several scenarios with different underlying functions and censoring probabilities. Here, we consider several scenarios where we vary the censoring probability across datasets (e.g., $\xi = 0.2$, $0.4$, or $0.6$). Details of the data-generating mechanisms are provided in Appendix~\ref{app:data_generation}. Altogether, we generate over 60 different datasets under varying scenarios. We then compare our \method against the oracle CATE and oracle bounds calculated using the ground-truth nuisance estimators. We evaluate our \method for both \textbf{Case} \circledgreen{1} with domain-knowledge bounds and \textbf{Case} \circledgreen{2} with conservative bounds. 

In our experiments, we employ the random forest~\citep{breiman.2001} for nuisance estimation in both the plug-in learner and our \method, which ensures that performance differences arise solely from the meta-learners rather than from the underlying model choice. We apply three-fold cross-fitting for our \method and use the random forest regressors with default hyperparameter settings in the second stage regression. For more implementation details, we refer to Appendix~\ref{app:implementation_detail}.

\textbf{Result:} We report the results in terms of the root mean squared error (RMSE) with respect to the oracle bounds (Table~\ref{tab:syn_exp_results}). Our \method achieves the smallest average RMSE and the lowest variance, indicating that it captures the underlying patterns in the data well. On average, the RMSE from our \method is $1.43$ times smaller than that of the plug-in learner. This confirms our theoretical motivation: the plug-in learner is inferior due to plug-in bias, while our \method, which is specifically designed to debias the plug-in estimator, performs best.

\begin{wrapfigure}[27]{l}{0.45\textwidth}
    \centering
    \vspace{-0.4cm}
    \includegraphics[width=1\linewidth]{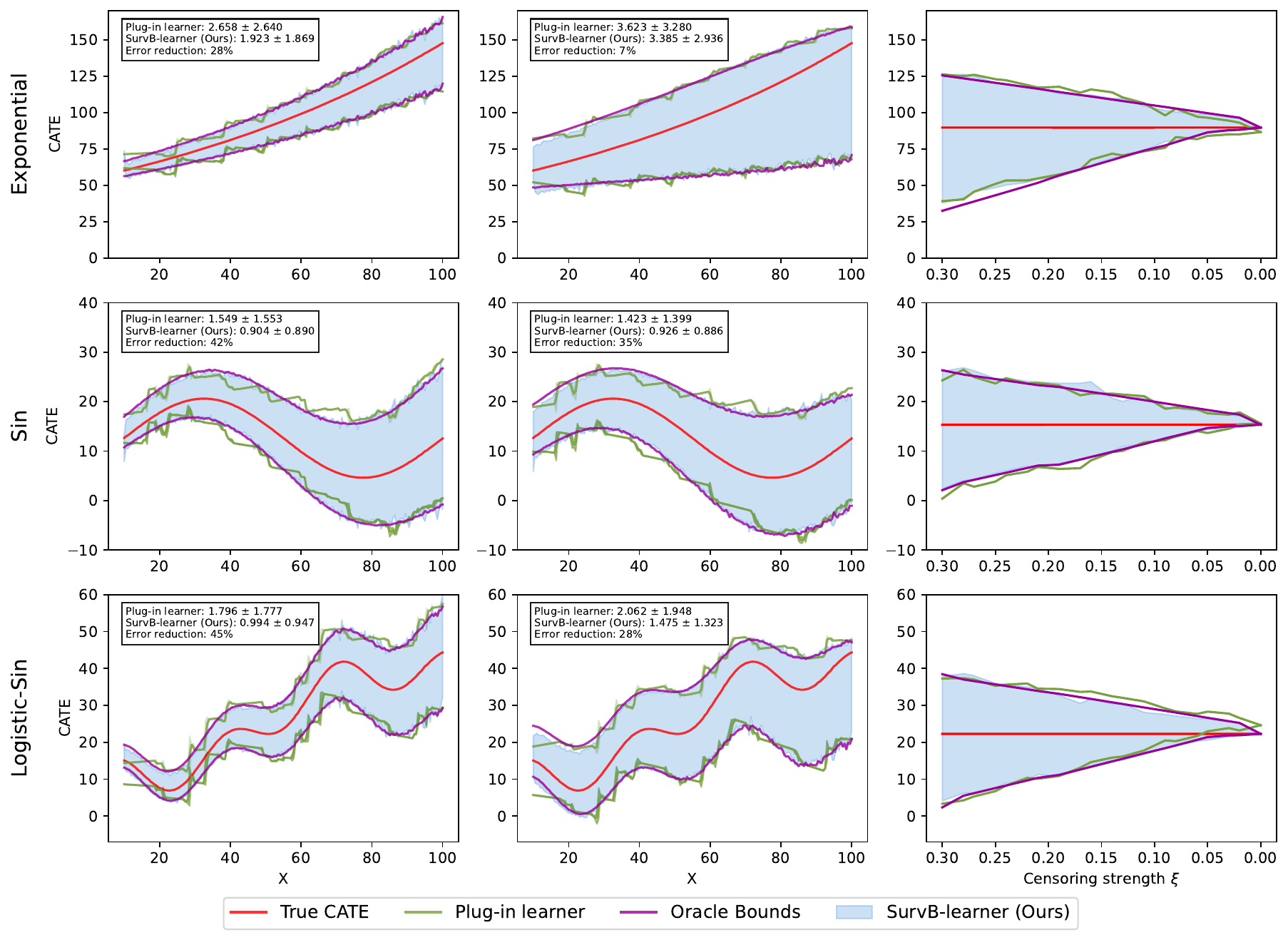}
    \vspace{-0.5cm}
    \caption{\textbf{Results for experiments with synthetic data}. Comparison of estimation methods for bounds based on synthetic datasets. Shown are the results \emph{domain-knowledge bounds} (left) and for \emph{conservative bounds} (center). \textbf{Right:} Estimated bound width across different censoring strengths for the conservative bounds. The width should go to zero as the censoring strength goes to zero. Our \method (shown by the blue shaded area in the background) is close to the oracle bounds and often overlaps with them. In contrast, the plug-in learner is unstable (``wiggly''), because of which the plug-in learner is thus not robust, consistent with theory.}
    \label{fig:syn_exp_results_fig}
    \vspace{-0.8cm}
\end{wrapfigure}

Fig.~\ref{fig:syn_exp_results_fig} provides additional insights by visualizing the oracle bounds in comparison to both the plug-in learner and our \method. We make two key observations: (i)~Our \method reliably learns both valid domain-knowledge bounds and valid conservative bounds (see the left and center columns in Fig.~\ref{fig:syn_exp_results_fig}). Our \method aligns more closely with the oracle bounds than the plug-in learner across different datasets and censoring strengths. Oftentimes, the plug-in learner even reports bounds that are too narrow and thus not faithful. (ii)~We report the widths of the conservative bound (see right column in Fig.~\ref{fig:syn_exp_results_fig}). Therein, the widths of the conservative bounds shrink as censoring strength decreases toward zero. This aligns with our expectation that censoring strength primarily determines the bound width.

\vspace{-0.3cm}
\subsection{Empirical study: ADJUVANT dataset}
\vspace{-0.3cm}

We demonstrate our \method on the ADJUVANT dataset~\citep{liu.2021, zhong.2018}, a real-world clinical trial in non-small cell lung cancer. Here, we apply conservative censoring bounds to partially-identify the CATE. Details about the results, data and implementation are provided in Appendix~\ref{app:adjuvant_results}. $\Rightarrow$ Our \method can reliably identify patient subgroups that benefit from treatment even in small-sample settings, which demonstrates the clinical utility of our \method.

\textbf{Conclusion:} We propose an assumption-lean framework that estimates CATE bounds via our \method to identify subgroups benefiting from treatment under informative censoring, with \textbf{future work} on learning sensitivity functions from external data and extending the framework with stronger inference tools.

\section*{Acknowledgement}
Our research is supported by the DAAD programme Konrad Zuse Schools of Excellence in Artificial Intelligence, sponsored by the Federal Ministry of Research, Technology and Space.

\bibliographystyle{abbrvnat}
\bibliography{literature}
\appendix
\newpage
\section{Framework of our work}
\label{app:methods_overview}

\begin{figure}[htbp]
\centering
    \includegraphics[width=0.9\linewidth]{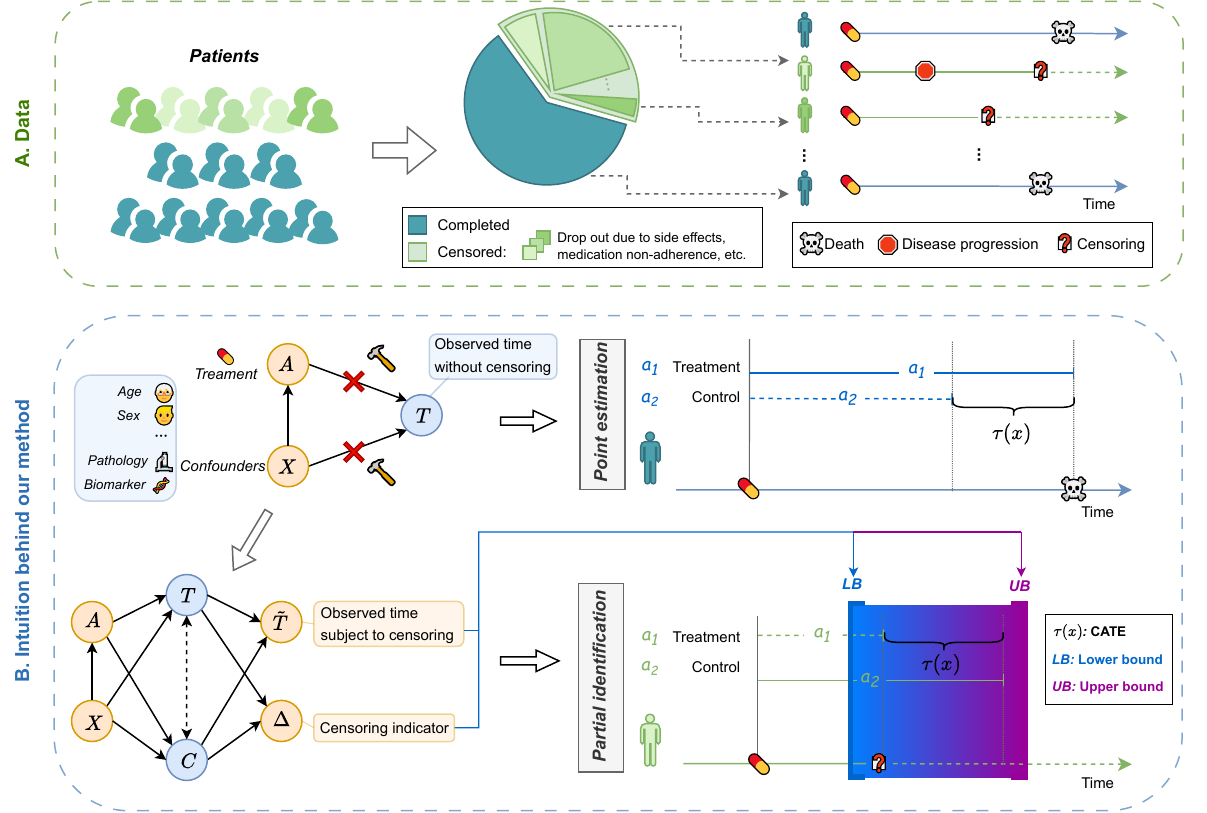}
    \vspace*{0.2 cm}
    \caption{\textbf{Overview of our framework.}
    \textcolor{darkgreen}{\textbf{(A)}~Data:} A key challenge in survival analysis is \emph{censoring}, where follow-up information about patient outcomes is incomplete. In practice, this can be due to various reasons such as patients dropping out of clinical studies for side effects~\citep{gupta.2025, wiegrebe.2024}. As a result, the exact event time (e.g., of death or disease progression) for censored patients is unknown. 
    \textcolor{darkblue}{\textbf{(B)}~Intuition behind our framework:} Under informative censoring (i.e., when dropout depends on survival time), unbiased point estimation of the CATE $\tau_{a_1, a_2}(x)$ is generally not possible. We therefore reframe the estimation task using partial identification, which allows us to construct informative bounds (i.e., {\emph{LB}} and {\emph{UB}}) for the CATE, while explicitly accounting for censoring bias. 
    \textcolor{darkyellow}{\textbf{(C)}}~\textbf{Output example:} We propose a two-stage meta-learner called \textbf{\method} to efficiently estimate these bounds, which provide intuitive insights into how unknown censoring may affect CATE estimates across patient subgroups.}
    \vspace{-0.6cm}
    \label{fig:method_overview}
\end{figure}

\newpage
\section{Extended related work}
\label{app:extended_related_work}

In the following, we provide an overview of existing works that are broadly relevant to our work on partial identification of CATEs with censored datasets. To this end, we review three streams of work: (1)~partial identification for CATE estimation, (2)~methods dealing with informative censoring in survival analysis, and (3)~survival analysis in a causal inference context.

\subsection{Partial identification for CATE estimation}

There is a rich body of literature on the partial identification of causal quantities from uncensored data. Existing works can be broadly categorized into two steps, beginning with methods for general partial identification and followed by those employing specific sensitivity models.

The aim of partial identification is to compute bounds for the causal quantity of interest whenever point identification is not possible~\citep{manski.1990}. Several works for partial identification have developed methods under different settings and thus different structural assumptions, such as discrete variables~\citep{duarte.2024}, instrumental variables~\citep {gunsilius.2020, kilbertus.2020, schweisthal.2024, schweisthal.2025}. More recently, neural network–based approaches have been applied to partial identification~\citep{padh.2023, xia.2021, xia.2023}. However, works on partial identification for causal quantities in survival settings are comparatively scarce. 

Alternatively, a large class of partial identification methods employs sensitivity models to impose assumptions on the strength of hidden confounding. These are typically classified by the underlying sensitivity model. Popular sensitivity models include Rosenbaum’s sensitivity model~\citep{rosenbaum.1983, rosenbaum.1987, heng.2021}, the marginal sensitivity model (MSM)~\citep{oprescu.2023, tan.2006, kallus.2019, zhao.2019, jesson.2021, dorn.2023, frauen.2023a, soriano.2023, dorn.2025}, and the $f$-sensitivity model~\citep{jin.2022}. There are further sensitivity models suitable for continuous treatment settings, which are often based on the MSM~\citep{frauen.2023a, bonvini.2022, jesson.2022} and the $f$-sensitivity model~\citep{jin.2023}. Frauen et al. \citep{frauen.2024a} provide a unified framework that is compatible with multiple sensitivity models, including Rosenbaum’s sensitivity model, the MSM, and the $f$-sensitivity model. However, all of them are dealing with unobserved confoundedness in the datasets and, therefore, are \emph{not} directly applicable to censored datasets, where bounds do \emph{not} come from confounding bias but censoring bias. 

\subsection{Classical survival methods}
In medical practice, survival times are not always observed due to censoring~\citep{leung.1997}. To account for this, several tools have been developed, including the Kaplan-Meier estimator~\citep{kaplan.1958} and the Cox proportional hazards model~\citep {cox.1972, breslow.1975}. Other common approaches include accelerated failure time (AFT) models~\citep{wei.1992} and proportional odds models~\citep{murphy.1997, harrell.2015}, which combine parametric and semi-parametric specifications. Together, these methods form the foundation of standard survival analysis, but they typically assume that censoring is \emph{non-informative} (while our focus is on \emph{informative} censoring).

\subsection{Survival analysis in causal inference contexts}

\textbf{Efficient average treatment effect estimation for time-to-event data:} Most work on semiparametric inference for time-to-event data has focused on average treatment effects (ATEs). For example, \citet{rubin.2007} proposed so-called doubly robust censoring unbiased transformations for semiparametric efficient inference on the ATE under censoring. Furthermore, various works proposed one-step Targeted Maximum Likelihood Estimators (TMLE) for estimating average causal quantities in survival settings~\citep{vanderlaan.2003, cai.2019, westling.2024}. Complementary to these, \citet{cheng.2022} proposed inverse probability of censoring weighted (IPCW) estimators for weighted average survival functions, though these rely on correctly specified parametric nuisance models.

Early developments in causal survival analysis have focused on addressing the bias induced by \emph{informative} censoring. Methods such as those in \citep{robins.1992, robins.2000, matsuyama.2008} developed inverse probability of censoring weighting (IPCW) techniques to obtain point estimates under specific model, e.g., Kaplan-Meier esitmator, proportional hazard model and AFT models. However, these approaches are not framed around current potential outcomes frameworks and therefore do not target causal estimands such as the CATE. Moreover, none of them are model-agnostic sensitivity tools: their bounds are tied to particular survival models.

Building upon this gap, recent work extends survival analysis to causal inference, where the goal is to estimate treatment effects, such as the CATE, under censoring. Here, methods fall broadly into two categories: model-based learners and meta-learners. The former, model-based learners, include tree-based methods~\citep{zhang.2017, henderson.2020, tabib.2020, cui.2023, hu.2021} and neural-network-based methods~\citep{schrod.2022, katzman.2018, curth.2021}. In contrast, few works have proposed (orthogonal) meta-learners specifically designed for censored time-to-event data~\citep{gao.2022, xu.2024, frauen.2025, xu.2022}. However, these methods focus on \emph{point} estimation and typically require \emph{censoring-independent assumptions}, which limit their use in real-world applications. Another line of research develops inferential methods for censored data based on conformal inference~\citep{candes.2023, gui.2024, davidov.2025}, but these approaches aim to produce prediction intervals for individual treatment effects rather than bounds on CATEs due to censoring.

Some recent studies have aimed to relax the standard assumption of non-informative censoring. For example, \citet{bai.2025} developed a partial identification method for the ATE using instrumental variables. \citet{rubinstein.2025} proposed a method for bounding the ATE under a mixture of informative and non-informative censoring, which relies on the ratio of informative censoring in the dataset to bound the ATE rather than focusing solely on informative censoring. Our method develops bounds for CATE under informative censoring. \citet{ying.2024} proposed a proxy-based causal inference method. However, these approaches focus on the ATE and \emph{not} the CATE under informative censoring.

\emph{Research gap:} To the best of our knowledge, no existing work provides partial identification bounds for the CATE in survival analysis under informative censoring.

\newpage
\section{\method for CATE}
\label{app:survb-learner_cate}

In our main paper, we have stated our \method for the CAPOs. In this section, we now state our \method for the CATE. For this, we follow Theorem~\ref{thm:bounds} and take the difference between the lower bounds and upper bounds of CAPOs. As in the main paper, we distinguish two cases: \textbf{Case}~\circledgreen{1} with domain knowledge, and \textbf{Case}~\circledgreen{2} with a conservative upper bound.

$\bullet$ \textbf{Case} \circledgreen{1} (\emph{domain-knowledge upper bounds).} Here, we construct upper bounds for a given $\gamma(x, a)$. Then, the pseudo-outcome of the upper bound is given by
{\footnotesize {\begin{equation}\label{eq:upper_gamma_bound_cate_learner}
\begin{aligned}
    &\hat{\phi}^{+}_{a_1, a_2}(x)\\
    =& \hat{\phi}^{+}(x, a_1) - \hat{\phi}^{-}(x, a_2) \\
    =& \frac{1(A = a_1, \Delta = 0)}{\hat{\pi}_{a_1}(x)}\{\tilde{T} - \hat{\nu}(0, x, a_1)\} + \frac{\hat{\nu}(0, x, a_1)1(A = a_1)}{\hat{\pi}_{a_1}(x)} \{1(\Delta = 0)- [1-\hat{\xi}(x, a_1)]\} + \hat{\nu}(0, x, a_1)[1-\hat{\xi}(x, a_1)]\\
    &+ \frac{1(A = a_1, \Delta = 1)}{\hat{\pi}_{a_1}(x)}\{\tilde{T} - \hat{\nu}(1, x, a_1)\} + \frac{\hat{\nu}(1, x, a_1)1(A = a_1)}{\hat{\pi}_{a_1}(x)} \{1(\Delta = 1)- \hat{\xi}(x, a_1)\} + \hat{\nu}(1, x, a_1)\hat{\xi}(x, a_1)\\
    &+ \gamma(x, a_1) \frac{1(A = a_1)}{\hat{\pi}_{a_1}(x)} \left\{ 1(\Delta = 1) - \hat{\xi}(x, a_1)\right\} + \gamma(x, a_1) \hat{\xi}(x, a_1)\\
    & - \frac{1(A = a_2, \Delta = 0)}{ \hat{\pi}_{a_2}(x)}\{\tilde{T} - \hat{\nu}(0, x, a_2)\} - \frac{\hat{\nu}(0, x, a_2)1(A = a_2)}{ \hat{\pi}_{a_2}(x)} \{1(\Delta = 0)- [1-\hat{\xi}(x, a_2)]\} - \hat{\nu}(0, x, a_2)[1-\hat{\xi}(x, a_2)]\\
    &- \frac{1(A = a_2, \Delta = 1)}{ \hat{\pi}_{a_2}(x)}\{\tilde{T} - \hat{\nu}(1, x, a_2)\} - \frac{\hat{\nu}(1, x, a_2)1(A = a_2)}{ \hat{\pi}_{a_2}(x)} \{1(\Delta = 1)- \hat{\xi}(x, a_2)\} - \hat{\nu}(1, x, a_2)\hat{\xi}(x, a_2).
\end{aligned}
\end{equation}}}
The pseudo-outcome of the lower bound is given by
{\footnotesize{\begin{equation}\label{eq:lower_gamma_bound_cate_learner}
\begin{aligned}
    &\hat{\phi}^{-}_{a_1, a_2}(x)\\
    =& \hat{\phi}^{-}(x, a_1) - \hat{\phi}^{+}(x, a_2) \\
    =& \frac{1(A = a_1, \Delta = 0)}{ \hat{\pi}_{a_1}(x)}\{\tilde{T} - \hat{\nu}(0, x, a_1)\} + \frac{\hat{\nu}(0, x, a_1)1(A = a_1)}{ \hat{\pi}_{a_1}(x)} \{1(\Delta = 0)- [1-\hat{\xi}(x, a_1)]\} + \hat{\nu}(0, x, a_1)[1-\hat{\xi}(x, a_1)]\\
    &+ \frac{1(A = a_1, \Delta = 1)}{ \hat{\pi}_{a_1}(x)}\{\tilde{T} - \hat{\nu}(1, x, a_1)\} + \frac{\hat{\nu}(1, x, a_1)1(A = a_1)}{ \hat{\pi}_{a_1}(x)} \{1(\Delta = 1)- \hat{\xi}(x, a_1)\} + \hat{\nu}(1, x, a_1)\hat{\xi}(x, a_1)\\
    &- \frac{1(A = a_2, \Delta = 0)}{\hat{\pi}_{a_2}(x)}\{\tilde{T} - \hat{\nu}(0, x, a_2)\} 
    - \frac{\hat{\nu}(0, x, a_2)1(A = a_2)}{\hat{\pi}_{a_2}(x)} \{1(\Delta = 0)- [1-\hat{\xi}(x, a_2)]\} 
    - \hat{\nu}(0, x, a_2)[1-\hat{\xi}(x, a_2)]\\
    &- \frac{1(A = a_2, \Delta = 1)}{\hat{\pi}_{a_2}(x)}\{\tilde{T} - \hat{\nu}(1, x, a_2)\} 
    - \frac{\hat{\nu}(1, x, a_2)1(A = a_2)}{\hat{\pi}_{a_2}(x)} \{1(\Delta = 1)- \hat{\xi}(x, a_2)\} 
    - \hat{\nu}(1, x, a_2)\hat{\xi}(x, a_2)\\
    &- \gamma(x, a_2) \frac{1(A = a_2)}{\hat{\pi}_{a_2}(x)} \left\{ 1(\Delta = 1) - \hat{\xi}(x, a_2)\right\} 
    - \gamma(x, a_2) \hat{\xi}(x, a_2).\\
\end{aligned}
\end{equation}}}

$\bullet$ \textbf{Case} \circledgreen{2} (\emph{conservative upper bounds).} The conservative upper bound is a special case of $\gamma(x, a) = t_{\mathrm{max}} - \mathbb{E} [\tilde{T} \mid \Delta = 1, X = x, A=a]$. Here, the corresponding pseudo-outcome is given by
{\footnotesize {\begin{equation}\label{eq:upper_non_informative_bound_cate_learner}
\begin{aligned}
    &\hat{\phi}^{+}_{a_1, a_2}(x)\\
    =& \hat{\phi}^{+}(x, a_1) - \hat{\phi}^{-}(x, a_2) \\
    =& \frac{1(A = a_1, \Delta = 0)}{\hat{\pi}_{a_1}(x)}\{\tilde{T} - \hat{\nu}(0, x, a_1)\} + \frac{\hat{\nu}(0, x, a_1)1(A = a_1)}{\hat{\pi}_{a_1}(x)} \{1(\Delta = 0)- [1-\hat{\xi}(x, a_1)]\} + \hat{\nu}(0, x, a_1)[1-\hat{\xi}(x, a_1)]\\
    &+ t_{\mathrm{max}} \frac{1(A = a_1)}{\hat{\pi}_{a_1}(x)} \left\{ 1(\Delta = 1) - \hat{\xi}(x, a_1)\right\} + t_{\mathrm{max}} \hat{\xi}(x, a_1)\\
    & - \frac{1(A = a_2, \Delta = 0)}{ \hat{\pi}_{a_2}(x)}\{\tilde{T} - \hat{\nu}(0, x, a_2)\} - \frac{\hat{\nu}(0, x, a_2)1(A = a_2)}{ \hat{\pi}_{a_2}(x)} \{1(\Delta = 0)- [1-\hat{\xi}(x, a_2)]\} - \hat{\nu}(0, x, a_2)[1-\hat{\xi}(x, a_2)]\\
    &- \frac{1(A = a_2, \Delta = 1)}{ \hat{\pi}_{a_2}(x)}\{\tilde{T} - \hat{\nu}(1, x, a_2)\} - \frac{\hat{\nu}(1, x, a_2)1(A = a_2)}{ \hat{\pi}_{a_2}(x)} \{1(\Delta = 1)- \hat{\xi}(x, a_2)\} - \hat{\nu}(1, x, a_2)\hat{\xi}(x, a_2).
\end{aligned}
\end{equation}}}
The pseudo-outcome of the lower bound is given by
{\footnotesize{\begin{equation}\label{eq:lower_non_informative_bound_cate_learner}
\begin{aligned}
    &\hat{\phi}^{-}_{a_1, a_2}(x)\\
    =& \hat{\phi}^{-}(x, a_1) - \hat{\phi}^{+}(x, a_2) \\
    =& \frac{1(A = a_1, \Delta = 0)}{ \hat{\pi}_{a_1}(x)}\{\tilde{T} - \hat{\nu}(0, x, a_1)\} + \frac{\hat{\nu}(0, x, a_1)1(A = a_1)}{ \hat{\pi}_{a_1}(x)} \{1(\Delta = 0)- [1-\hat{\xi}(x, a_1)]\} + \hat{\nu}(0, x, a_1)[1-\hat{\xi}(x, a_1)]\\
    &+ \frac{1(A = a_1, \Delta = 1)}{ \hat{\pi}_{a_1}(x)}\{\tilde{T} - \hat{\nu}(1, x, a_1)\} + \frac{\hat{\nu}(1, x, a_1)1(A = a_1)}{ \hat{\pi}_{a_1}(x)} \{1(\Delta = 1)- \hat{\xi}(x, a_1)\} + \hat{\nu}(1, x, a)\hat{\xi}(x, a_1)\\
    &- \frac{1(A = a_2, \Delta = 0)}{\hat{\pi}_{a_2}(x)}\{\tilde{T} - \hat{\nu}(0, x, a_2)\} 
    - \frac{\hat{\nu}(0, x, a_2)1(A = a_2)}{\hat{\pi}_{a_2}(x)} \{1(\Delta = 0)- [1-\hat{\xi}(x, a_2)]\} 
    - \hat{\nu}(0, x, a_2)[1-\hat{\xi}(x, a_2)]\\
    &- t_{\mathrm{max}} \frac{1(A = a_2)}{\hat{\pi}_{a_2}(x)} \left\{ 1(\Delta = 1) - \hat{\xi}(x, a_2)\right\} 
    - t_{\mathrm{max}} \hat{\xi}(x, a_2).\\
\end{aligned}
\end{equation}}}

\begin{algorithm}[htbp]
\caption{\method}
\label{alg:pseudocode}
\begin{algorithmic}[1]
\Input Observational data $\mathcal{D} = \{(x_i, a_i, \tilde{t}_i, \delta_i)_{i=1}^n\}$, domain knowledge $\gamma(x, a)$
\Ensure Estimated bounds $\hat{\mu}^{-}(x, a)$, $\hat{\mu}^{+}(x, a)$
\State \textbf{/* Stage 1: Nuisance estimation with cross-fitting */}
\State $\hat{\nu}(\delta, x, a) \gets \hat{\mathbb{E}}[\tilde{T} \mid \Delta = \delta, X = x, A = a]$
\State $\hat{\xi}(x, a) \gets \hat{\mathbb{P}}[\Delta = 1 \mid X = x, A = a]$
\State $\hat{\pi}_a(x) \gets \hat{\mathbb{P}}[A = a \mid X = x]$
\State $\hat{\phi}^{\pm}(x, a) \gets$ Eq.~(\ref{eq:lower_bound_learner}), (\ref{eq:upper_gamma_bound_learner}), (\ref{eq:upper_non_informative_bound_learner}) based on $\{\hat{\pi}_a, \hat{\xi}, \hat{\nu}\}$
\State \textbf{/* Stage 2: Pseudo-outcome regression */}
\State $\hat{\mu}^{-}(x, a) \gets \hat{\mathbb{E}}[\hat{\phi}^{-} \mid X = x]$
\State $\hat{\mu}^{+}(x, a) \gets \hat{\mathbb{E}}[\hat{\phi}^{+} \mid X = x]$
\State \Return $\hat{\mu}^{-}(x, a),\ \hat{\mu}^{+}(x, a)$
\end{algorithmic}
\end{algorithm}

\newpage
\section{Proofs}
\subsection{Proof of Lemma~\ref{lem:total_prob_causal}}
\label{app:proof_lemma}
\begin{proof}
The result follows from the law of total expectation~\citep{barnard.1949} and the identifiability assumptions, i.e., 
\begin{equation}
\begin{aligned}
&\E[ T(a)\mid X = x]\\
=& \E[ T \mid X = x, A = a] \\
=& \E[ T \mid X = x, A = a, \Delta = 0] P(\Delta = 0 \mid X = x, A = a)\\
&+ \E[ T \mid X = x, A = a, \Delta = 1] P(\Delta = 1 \mid X = x, A = a).
\end{aligned}
\end{equation}
Then, we have
\begin{equation}
\begin{aligned}
&\E [ T(a_1) - T(a_2) \mid X = x]\\
=& \E[ T(a_1)\mid X = x] - \E [ T(a_2)\mid X = x]\\
=& \E[ T\mid X = x, A = a_1] - \E [ T\mid X = x, A = a_2]\\
=& \nu(0, x, a_1)[1-\xi(x, a_1)] + \bluetext{\E[ T \mid X = x, A = a_1, \Delta = 1]}\xi(x, a_1)\\
&- \nu(0, x, a_2)[1-\xi(x, a_2)] - \bluetext{\E[ T \mid X = x, A = a_2, \Delta = 1]}\xi(x, a_2).
\end{aligned}
\end{equation}

\end{proof}

\newpage
\subsection{Proof of Theorem~\ref{thm:bounds}}
\label{app:proof_bounds}
\begin{proof}
Note that  
\begin{equation}
\tau_{a_1, a_2}(x) = \mu(x, a_1) - \mu(x, a_2).
\end{equation}
Further, we have that 
\begin{equation}
\mu(x, a) = \nu(0, x, a)[1-\xi(x, a)] + \E[T \mid X = x, A =a, \Delta= 1]\xi(x, a).
\end{equation}
while $ \E[\tilde{T} \mid \Delta = 1, X = x, A = a] \leq \E[T \mid \Delta = 1, X = x, A = a] \leq \E[\tilde{T} \mid \Delta = 1, X = x, A = a] + \gamma(x, a)$ by the definition of $\gamma(x, a)$.
Therefore, by the definitions of $\mu^{-}(x, a)$ and $\mu^{+}(x, a)$, we have 
\begin{equation}
\mu^{-}(x, a) \leq \mu(x, a) \leq \mu^{+}(x, a).
\end{equation}
Hence, taking the minimum and maximum of $\mu(x, a_1)$ and $\mu(x, a_2)$ yields the result:
\begin{equation}
\mu^{-}(x, a_1) - \mu^{+}(x, a_2) \leq \tau_{a_1, a_2}(x) \leq \mu^{+}(x, a_1) - \mu^{-}(x, a_2).
\end{equation}

\end{proof}

\newpage
\subsection{Proof of Proposition~\ref{prop:width_of_bounds}}
\label{app:proof_width}

\begin{proof}
First, we recall the definition of the domain knowledge upper bounds for two CAPOs with different treatments $a_1$ and $a_2$: 
\begin{equation}
\begin{aligned}
\mu^{-}(x, a_1) =& \nu(0,x, a_1)(1-\xi(x, a_1))+\nu(1,x, a_1)\xi(x, a_1),\\
\mu^{+}(x, a_1) =& \nu(0,x, a_1)(1-\xi(x, a_1))+\nu(1,x, a_1)\xi(x, a_1) + \gamma(x, a_1)\xi(x, a_1),\\
\mu^{-}(x, a_2) =& \nu(0,x, a_2)(1-\xi(x, a_2))+\nu(1,x, a_2)\xi(x, a_2),\\
\mu^{+}(x, a_2) =& \nu(0,x, a_2)(1-\xi(x, a_2))+\nu(1,x, a_2)\xi(x, a_2) + \gamma(x, a_2)\xi(x, a_2).\\
\end{aligned}
\end{equation}
Then, we could see that for treatment $a \in\gA$, the width of CAPO bounds is
\begin{equation}
\begin{aligned}
&\mu^{+}(x, a) - \mu^{-}(x, a) =\gamma(x, a)\xi(x, a) \\
\end{aligned}
\end{equation}

Then, we follow Theorem~\ref{thm:bounds} to derive \circledgreen{1} domain knowledge bounds of CATE $\tau_{a_1, a_2}(x)$ via
\begin{equation}
\begin{aligned}
&\tau_{a_1, a_2}^{-}(x)\\
=&\mu^{-}(x, a_1) - \mu^{+}(x, a_2)\\
=& \nu(0,x, a_1)(1-\xi(x, a_1))+\nu(1,x, a_1)\xi(x, a_1) - \nu(0,x, a_2)(1-\xi(x, a_2)) -\nu(1,x, a_2)\xi(x, a_2)\\
&- \gamma(x, a_2)\xi(x, a_2),\\
&\tau_{a_1, a_2}^{+}(x)\\
=&\mu^{+}(x, a_1) - \mu^{-}(x, a_2)\\
=& \nu(0,x, a_1)(1-\xi(x, a_1))+\nu(1,x, a_1)\xi(x, a_1) - \nu(0,x, a_2)(1-\xi(x, a_2)) - \nu(1,x, a_2)\xi(x, a_2)\\
&+ \gamma(x, a_1)\xi(x, a_1).\\
\end{aligned}
\end{equation}

By the definition of $\gamma(x, a)$ in Eq.~(\ref{eq:range_of_gamma}) and Theorem~\ref{thm:bounds}, the true CATE is guaranteed to lie within these bounds, i.e., $\tau_{a_1,a_2}^{-}(x) \leq \tau_{a_1,a_2}(x) \leq \tau_{a_1,a_2}^{+}(x)$. The width of the bounds is then
\begin{equation}
\tau_{a_1, a_2}^{+}(x) - \tau_{a_1, a_2}^{-}(x) = \gamma(x, a_1)\xi(x, a_1) + \gamma(x, a_2)\xi(x, a_2).
\end{equation}

Here, we yield for \circledgreen{2} conservative bounds of CATE $\tau_{a_1, a_2}(x)$ via
\begin{equation}
\begin{aligned}
\mu^{-}(x, a_1) =& \nu(0,x, a_1)(1-\xi(x, a_1))+\nu(1,x, a_1)\xi(x, a_1),\\
\mu^{+}(x, a_1) =& \nu(0,x, a_1)(1-\xi(x, a_1))+ t_{\mathrm{max}}\xi(x, a_1)\\
\mu^{-}(x, a_2) =& \nu(0,x, a_2)(1-\xi(x, a_2))+\nu(1,x, a_2)\xi(x, a_2),\\
\mu^{+}(x, a_2) =& \nu(0,x, a_2)(1-\xi(x, a_2))+ t_{\mathrm{max}}\xi(x, a_2) . \\
\end{aligned}
\end{equation}
Next, we follow Theorem~\ref{thm:bounds} to derive the domain knowledge bounds of the CATE $\tau_{a_1, a_2}(x)$. We obtain
\begin{equation}
\begin{aligned}
&\tau_{a_1, a_2}^{-}(x)\\
=&\mu^{-}(x, a_1) - \mu^{+}(x, a_2)\\
=& \nu(0,x, a_1)(1-\xi(x, a_1))+\nu(1,x, a_1)\xi(x, a_1) - \nu(0,x, a_2)(1-\xi(x, a_2)) - t_{\mathrm{max}}\xi(x, a_2),\\
&\tau_{a_1, a_2}^{+}(x)\\
=&\mu^{+}(x, a_1) - \mu^{-}(x, a_2)\\
=& \nu(0,x, a_1)(1-\xi(x, a_1)) + t_{\mathrm{max}}\xi(x, a_1) - \nu(0,x, a_2)(1-\xi(x, a_2)) - \nu(1,x, a_2)\xi(x, a_2) 
\end{aligned}
\end{equation}
By the definition of $\gamma(x, a) = t_{\mathrm{max}} - \nu(1, x, a)$ in Eq.~(\ref{eq:range_of_gamma}) and Theorem~\ref{thm:bounds}, the true CATE is guaranteed to lie within these bounds, i.e., $\tau_{a_1,a_2}^{-}(x) \leq \tau_{a_1,a_2}(x) \leq \tau_{a_1,a_2}^{+}(x)$. The width of the bounds is then
\begin{equation}
\tau_{a_1, a_2}^{+}(x) - \tau_{a_1, a_2}^{-}(x) = [t_{\mathrm{max}} - \nu(1,x,a_1)]\xi(x,a_1) + [t_{\mathrm{max}} - \nu(1,x,a_2)]\xi(x,a_2).
\end{equation}
\end{proof}

\newpage
\subsection{Derivation of meta-learners}
\label{app:derivation_estimator}
First, we derive the efficient influence function (EIF) score for the \textbf{lower bound}:
{\small{
\begin{equation}
\begin{aligned}
& \mathbb{EIF} (\psi(\mathbb{P}))\\
=&  \mathbb{EIF} \left( \int_x \E \left[\nu(0, x, a)\left(1-\xi(x, a)\right) + \nu(1, x, a) \xi(x, a)\right] p(x) \d x \right)\\
=& \frac{1(A = a, \Delta = 0)}{\hat{\pi}_a(x)}\{\tilde{T} - \hat{\nu}(0, x, a)\} + \frac{\hat{\nu}(0, x, a)1(A = a)}{\hat{\pi}_a(x)} \{1(\Delta = 0)- [1-\hat{\xi}(x, a)]\} + \hat{\nu}(0, x, a)[1-\hat{\xi}(x, a)]\\
&+ \frac{1(A = a, \Delta = 1)}{\hat{\pi}_a(x)}\{\tilde{T} - \hat{\nu}(1, x, a)\} + \frac{\hat{\nu}(1, x, a)1(A = a)}{\hat{\pi}_a(x)} \{1(\Delta = 1)- \hat{\xi}(x, a)\} + \hat{\nu}(1, x, a)\hat{\xi}(x, a) -\psi(a).
\end{aligned}
\end{equation}
}}
\textbf{Upper bound:} For the upper bound, we distinguish the two cases from the main paper as follows.

\textbf{Case} \circledgreen{1}:( \emph{domain-knowledge upper bounds}). We treat the domain function $\gamma(x, a)$ as a constant function and thus yield
\begin{equation}
\begin{aligned}
&  \mathbb{EIF} (\psi(\mathbb{P})) \\
=&  \mathbb{EIF} \left( \int_x \rmE \left[\nu(0, x, a)\left(1-\xi(x, a)\right) + \nu(1, x, a) \xi(x, a) + \gamma(x,a) \xi(x, a)\right] p(x) \d x \right) \\
=&\frac{1(A = a, \Delta = 0)}{\hat{\pi}_a(x)}\{\tilde{T} - \hat{\nu}(0, x, a)\} + \frac{\hat{\nu}(0, x, a)1(A = a)}{\hat{\pi}_a(x)} \{1(\Delta = 0)- [1-\hat{\xi}(x, a)]\} + \hat{\nu}(0, x, a)[1-\hat{\xi}(x, a)]\\
&+ \frac{1(A = a, \Delta = 1)}{\hat{\pi}_a(x)}\{\tilde{T} - \hat{\nu}(1, x, a)\} + \frac{\hat{\nu}(1, x, a)1(A = a)}{\hat{\pi}_a(x)} \{1(\Delta = 1)- \hat{\xi}(x, a)\} + \hat{\nu}(1, x, a)\hat{\xi}(x, a)\\
&+ \gamma(x, a) \frac{1(A = a)}{\hat{\pi}_a(x)} \left\{ 1(\Delta = 1) - \xi(X, a)\right\} + \gamma(x, a) \xi(X, a) -\psi(a).
\end{aligned}
\end{equation}

\textbf{Case} \circledgreen{2}:( \emph{conservative upper bounds}). We set $\gamma(x, a)$ as our known sensitivity function. We yield

\begin{equation}
\begin{aligned}
& \mathbb{EIF} (\psi(\mathbb{P}))\\
=&  \mathbb{EIF} \left( \int_x \nu(0, x, a)\left(1-\xi(x, a)\right) + t_{\mathrm{max}} \xi(x, a) p(x) \d x \right) \\
=& \int_x  \mathbb{EIF} \left[ \nu(0, x, a)\left(1-\xi(x, a)\right) + t_{\mathrm{max}} \xi(x, a) p(x) \d x \right]\\
=& \int_x   \mathbb{EIF} \left[\nu(0, x, a)\left(1-\xi(x, a)\right) p(x)\right]+  \mathbb{EIF} \left[t_{\mathrm{max}} \xi(x, a) p(x) \right] \d x \\
=& \int_x   \mathbb{EIF} \left[\nu(0, x, a)\left(1-\xi(x, a)\right) p(x)\right]+  \mathbb{EIF} \left[t_{\mathrm{max}} \xi(x, a) p(x) \right] \d x\\
=& \underbrace{\int_x   \mathbb{EIF} \left[\nu(0, x, a)\left(1-\xi(x, a)\right) p(x)\right] \d x}_{\ast} + \underbrace{\int_x  \mathbb{EIF} \left[t_{\mathrm{max}} \xi(x, a) p(x) \right] \d x}_{\ast\ast}.\\
\end{aligned}
\end{equation}
We proceed with the derivation by separating the above expression into two parts: 
\begin{equation}
\begin{aligned}
&\mathrm{(\ast)}\\
=& \int_x { \mathbb{EIF} \left[\nu(0, x, a)\right] }\left[1-\xi(x, a)\right] p(x) + \nu(0, x, a)  { \mathbb{EIF} \left[1-\xi(x, a)\right]} p(x) + \nu(0, x, a) \left[1-\xi(x, a)\right]  { \mathbb{EIF} \left[p(x)\right]} \d x \\
=& \int_x   {\frac{1\{X = x, A = a, \Delta = 0\}}{p(x, a) [1-\xi(x, a)]}[\tilde{T} -\nu(0, x, a)]}  \left[1-\xi(x, a)\right] p(x)\\
&+ \nu(0, x, a) {\frac{1\{X = x, A = a\}}{p(x, a)} [1(\Delta = 0) - 1 + \xi(x, a)]} p(x)\\
&+ \nu(0, x, a) \left[1-\xi(x, a)\right] {\left[1\{X = x\} - p(x)\right]} \d x\\
=& \int_x \frac{1(X = x, A = a, \Delta = 0)}{p(a \mid x)} \left[\tilde{T} - \nu(0, x, a)\right]
+ \frac{1(X=x, A=a)}{p(a \mid x)} \nu(0, x, a) \left[1(\Delta = 0)- 1 + \xi(x, a)\right] \\
&+ \nu(0, x, a)[1-\xi(x, a)]\left[1\{X = x\} - p(x)\right] \d x \\
=& \frac{1(A = a, \Delta = 0)}{p(a \mid X)} \left[\tilde{T} - \nu(0, x, a)\right]
+ \frac{1(A = a)}{p(a\mid X)} \nu(0, X, a)\{1(\Delta = 0)- [1-\xi(X, a)]\}\\
&+ \nu(0, X, a)[1-\xi(X, a)]
- \psi_{\ast}(a),\\
\end{aligned}
\end{equation}
and 
\begin{equation}
\begin{aligned}
&(\ast\ast)\\
=& \int_x  \mathbb{EIF} \left[t_{\mathrm{max}} \xi(x, a) p(x)\right] \d x\\
=& t_{\mathrm{max}} \int_x  \mathbb{EIF} \left[\xi(x, a) p(x)\right] \d x\\
=& t_{\mathrm{max}} \int_x  { \mathbb{EIF} \left[\xi(x, a) \right]} p(x) + \xi(x, a)  { \mathbb{EIF} \left[ p(x)\right]} \d x \\
=& t_{\mathrm{max}} \int_x \frac{1(X = x, A = a)}{p(X = x, A = a)} \left\{ 1(\Delta = 1) - \xi(x, a)\right\} p(x) + \xi(x, a) \left[1\{X = x\} - p(x)\right] \d x\\
=& t_{\mathrm{max}} \frac{1(A = a)}{p(a\mid X )} \left\{ 1(\Delta = 1) - \xi(X, a)\right\} + t_{\mathrm{max}} \xi(X, a) - \psi_{\ast\ast}(a).
\end{aligned}
\end{equation}

Finally, our estimator for the conservative upper bound is 
\begin{equation}
\begin{aligned}
&\hat{\phi}^{+}(x, a)\\
=& \frac{1(A = a, \Delta = 0)}{\hat{\pi}_a(x)}\{\tilde{T} - \hat{\nu}(0,  x, a)\} + \frac{\hat{\nu}(0,  x, a)1(A = a)}{\hat{\pi}_a(x)} \{1(\Delta = 0)- [1-\hat{\xi}( x, a)]\}\\
&+ \hat{\nu}(0,  x, a)[1-\hat{\xi}( x, a)] + t_{\mathrm{max}} \frac{1(A = a)}{\hat{\pi}_a(x)} \left\{ 1(\Delta = 1) - \hat{\xi}( x, a)\right\} + t_{\mathrm{max}} \hat{\xi}(x, a).
\end{aligned}
\end{equation}

\newpage
\subsection{Proof of Theorem~\ref{thm:DR-property}}
\label{app:proof_DR_property}
\begin{proof}
We proceed by calculating $\E\left[\hat{\phi}^{+} \mid X = x\right]$ and $\E\left[\hat{\phi}^{-} \mid X = x\right]$ for each pseudo-outcome $\hat{\phi}^{+}$ and $\hat{\phi}^{-}$,  which corresponds to an oracle second stage regression. 

We start with \textbf{the lower bounds}, which use pseudo-outcomes defined in Eq.~(\ref{eq:lower_bound_learner}). We can then write the equation as
\begin{equation}
\begin{aligned}
    &\hat{\phi}^{-}(x, a)\\
    =& \frac{1(A = a, \Delta = 0)}{\hat{\pi}_a(x)}\{\tilde{T} - \hat{\nu}(0, x, a)\} + \frac{\hat{\nu}(0, x, a)1(A = a)}{\hat{\pi}_a(x)} \{1(\Delta = 0)- [1-\hat{\xi}(x, a)]\} \\
    &+ \frac{1(A = a, \Delta = 1)}{\hat{\pi}_a(x)}\{\tilde{T} - \hat{\nu}(1, x, a)\} + \frac{\hat{\nu}(1, x, a)1(A = a)}{\hat{\pi}_a(x)} \{1(\Delta = 1)- \hat{\xi}(x, a)\}\\
    &+ \hat{\nu}(0, x, a)[1-\hat{\xi}(x, a)] + \hat{\nu}(1, x, a)\hat{\xi}(x, a)\\
    =& \frac{1(A = a, \Delta = 0)}{\hat{\pi}_a(x)}\tilde{T} - \frac{1(A = a)}{\hat{\pi}_a(x)}\hat{\nu}(0, x, a)[1-\hat{\xi}(x, a)]+ \hat{\nu}(0, x, a)[1-\hat{\xi}(x, a)]\\
    &+ \frac{1(A = a, \Delta = 1)}{\hat{\pi}_a(x)}\tilde{T} - \frac{1(A = a)}{\hat{\pi}_a(x)}\hat{\nu}(1, x, a)\hat{\xi}(x, a) + \hat{\nu}(1, x, a)\hat{\xi}(x, a).
\end{aligned}
\end{equation}

Hence, by calculating the conditional expectation, we obtain

{\small{\begin{equation}
\begin{aligned}
    &\E \left[ \hat{\phi}^{-}(x, a) \mid X =x\right]\\
    =& \E \left[\frac{1(A = a, \Delta = 0)}{\hat{\pi}_a(x)}\tilde{T} \mid X =x\right]
    - \E \left[\frac{1(A = a)}{\hat{\pi}_a(x)}\hat{\nu}(0, x, a)[1-\hat{\xi}(x, a)]\mid X =x\right]\\
    &+\E \left[\hat{\nu}(0, x, a)[1-\hat{\xi}(x, a)] \mid X =x\right]
    + \E \left[\frac{1(A = a, \Delta = 1)}{\hat{\pi}_a(x)}\tilde{T} \mid X =x\right]\\
    &- \E \left[\frac{1(A = a)}{\hat{\pi}_a(x)}\hat{\nu}(1, x, a)\hat{\xi}(x, a)\mid X =x\right]
    +\E \left[\hat{\nu}(1, x, a)\hat{\xi}(x, a) \mid X =x\right]\\
    =&\frac{\pi_a(x)}{\hat{\pi}_a(x)} [1-\xi(x, a)] \nu(0, x, a) - \frac{\pi_a(x)}{\hat{\pi}_a(x)} [1-\hat{\xi}(x, a)]\hat{\nu}(0, x, a) +\hat{\nu}(0, x, a)[1-\hat{\xi}(x, a)] \\
    &+\frac{\pi_a(x)}{\hat{\pi}_a(x)} \xi(x, a) \nu(1, x, a) - \frac{\pi_a(x)}{\hat{\pi}_a(x)} \hat{\xi}(x, a)\hat{\nu}(1, x, a) +\hat{\nu}(1, x, a)\hat{\xi}(x, a) \\
\end{aligned}
\end{equation}}}

Using $\hat{\nu}(\delta, x, a) = \nu(\delta, x, a)$, $\hat{\xi}(x, a) = \xi(x, a)$ and $\hat{\pi}_a(x) = \pi_a(x)$ implies
\begin{equation}
\begin{aligned}
&\E \left[ \hat{\phi}^{-}(x, a) \mid X =x\right]\\
=&\frac{\pi_a(x)}{\hat{\pi}_a(x)} [1-\xi(x, a)] \nu(0, x, a) - \frac{\pi_a(x)}{\hat{\pi}_a(x)} [1-\hat{\xi}(x, a)]\hat{\nu}(0, x, a) +\hat{\nu}(0, x, a)[1-\hat{\xi}(x, a)] \\
&+\frac{\pi_a(x)}{\hat{\pi}_a(x)} \xi(x, a) \nu(1, x, a) - \frac{\pi_a(x)}{\hat{\pi}_a(x)} \hat{\xi}(x, a)\hat{\nu}(1, x, a) +\hat{\nu}(1, x, a)\hat{\xi}(x, a) \\
=&  [1-\xi(x, a)] \nu(0, x, a) -  [1-\xi(x, a)] \nu(0, x, a) +  [1-\xi(x, a)] \nu(0, x, a) \\
&+ \xi(x, a) \nu(1, x, a) - \xi(x, a) \nu(1, x, a) + \xi(x, a) \nu(1, x, a)\\
=& [1-\xi(x, a)] \nu(0, x, a) + \xi(x, a) \nu(1, x, a)\\
=& \mu^{-}(x, a),
\end{aligned}
\end{equation}
which proves consistency.

We distinguish two cases. (1) Under $\hat{\nu}(\delta, x, a) = \nu(\delta, x, a)$ and $\hat{\xi}(x, a) = \xi(x, a)$, this reduces to 
\begin{equation}
\begin{aligned}
&\E \left[ \hat{\phi}^{-}(x, a) \mid X =x\right]\\
=&\frac{\pi_a(x)}{\hat{\pi}_a(x)} [1-\xi(x, a)] \nu(0, x, a) 
- \frac{\pi_a(x)}{\hat{\pi}_a(x)} [1-{\xi}(x, a)]{\nu}(0, x, a) 
+{\nu}(0, x, a)[1-{\xi}(x, a)] \\
&+\frac{\pi_a(x)}{\hat{\pi}_a(x)} \xi(x, a) \nu(1, x, a) 
- \frac{\pi_a(x)}{\hat{\pi}_a(x)} {\xi}(x, a){\nu}(0, x, a) 
+{\nu}(1, x, a){\xi}(x, a) \\
=& [1-\xi(x, a)] \nu(0, x, a) + \xi(x, a) \nu(1, x, a)\\
=& \mu^{-}(x, a).
\end{aligned}
\end{equation}

(2) Under $\hat{\pi}_a(x) = \pi_a(x)$, this reduces to
\begin{equation}
\begin{aligned}
&\E \left[ \hat{\phi}^{-}(x, a) \mid X =x\right]\\
=&\frac{\pi_a(x)}{{\pi}_a(x)} [1-\xi(x, a)] \nu(0, x, a) 
- \frac{\pi_a(x)}{{\pi}_a(x)} [1-\hat{\xi}(x, a)]\hat{\nu}(0, x, a)
+\hat{\nu}(0, x, a)[1-\hat{\xi}(x, a)] \\
&+\frac{\pi_a(x)}{{\pi}_a(x)} \xi(x, a) \nu(1, x, a) 
- \frac{\pi_a(x)}{{\pi}_a(x)} \hat{\xi}(x, a)\hat{\nu}(0, x, a) 
+\hat{\nu}(1, x, a)\hat{\xi}(x, a) \\
=& [1-\xi(x, a)] \nu(0, x, a) + \xi(x, a) \nu(1, x, a)\\
=& \mu^{-}(x, a).
\end{aligned}
\end{equation}

Together, this proves the double robustness.

Next, we move on to the upper bounds and prove them again in two cases. The pseudo-outcome of \textbf{Case} \circledgreen{1}(\emph{domain knowledge upper bounds}). defined in Eq.~(\ref{eq:upper_gamma_bound_learner}) can be simplified to 
\begin{equation}
\begin{aligned}
    &\hat{\phi}^{+}(x, a)\\
    =& \frac{1(A = a, \Delta = 0)}{\hat{\pi}_a(x)}\{\tilde{T} - \hat{\nu}(0, x, a)\} + \frac{\hat{\nu}(0, x, a)1(A = a)}{\hat{\pi}_a(x)} \{1(\Delta = 0)- [1-\hat{\xi}(x, a)]\} \\
    &+ \frac{1(A = a, \Delta = 1)}{\hat{\pi}_a(x)}\{\tilde{T} - \hat{\nu}(1, x, a)\} + \frac{\hat{\nu}(1, x, a)1(A = a)}{\hat{\pi}_a(x)} \{1(\Delta = 1)- \hat{\xi}(x, a)\}\\
    &+ \hat{\nu}(0, x, a)[1-\hat{\xi}(x, a)] + \hat{\nu}(1, x, a)\hat{\xi}(x, a)\\
    &+ \gamma(x, a) \frac{1(A = a)}{\hat{\pi}_a(x)} \left\{ 1(\Delta = 1) - \hat{\xi}(x, a)\right\} + \gamma(x, a) \hat{\xi}(x, a)\\
    =& \frac{1(A = a, \Delta = 0)}{\hat{\pi}_a(x)}\tilde{T} - \frac{1(A = a)}{\hat{\pi}_a(x)}\hat{\nu}(0, x, a)[1-\hat{\xi}(x, a)]+ \hat{\nu}(0, x, a)[1-\hat{\xi}(x, a)]\\
    &+ \frac{1(A = a, \Delta = 1)}{\hat{\pi}_a(x)}\tilde{T} - \frac{1(A = a)}{\hat{\pi}_a(x)}\hat{\nu}(1, x, a)\hat{\xi}(x, a) + \hat{\nu}(1, x, a)\hat{\xi}(x, a)\\
    &+ \gamma(x, a) \frac{1(A = a, \Delta = 1)}{\hat{\pi}_a(x)} - \gamma(x, a) \frac{1(A = a)}{\hat{\pi}_a(x)}\hat{\xi}(x, a) + \gamma(x, a) \hat{\xi}(x, a).\\
\end{aligned}
\end{equation}

The proof works analogously to the lower bound. That is, we calculate the conditional expectation over the pseudo outcome:
{\small{\begin{equation}
\begin{aligned}
    &\E \left[ \hat{\phi}^{+}(x, a) \mid X =x\right]\\
    =& \E \left[\frac{1(A = a, \Delta = 0)}{\hat{\pi}_a(x)}\tilde{T} \mid X =x\right]
    - \E \left[\frac{1(A = a)}{\hat{\pi}_a(x)}\hat{\nu}(0, x, a)[1-\hat{\xi}(x, a)]\mid X =x\right]\\
    &+\E \left[\hat{\nu}(0, x, a)[1-\hat{\xi}(x, a)] \mid X =x\right]
    + \E \left[\frac{1(A = a, \Delta = 1)}{\hat{\pi}_a(x)}\tilde{T} \mid X =x\right]\\
    &- \E \left[\frac{1(A = a)}{\hat{\pi}_a(x)}\hat{\nu}(1, x, a)\hat{\xi}(x, a)\mid X =x\right]
    +\E \left[\hat{\nu}(1, x, a)\hat{\xi}(x, a) \mid X =x\right]\\
    &+ \E \left[ \gamma(x, a) \frac{1(A = a, \Delta = 1)}{\hat{\pi}_a(x)} \mid X =x\right]
    - \E \left[ \gamma(x, a) \frac{1(A = a)}{\hat{\pi}_a(x)}\hat{\xi}(x, a) \mid X =x\right]\\
    &+ \E \left[ \gamma(x, a) \hat{\xi}(x, a) \mid X =x\right]\\
    =&\frac{\pi_a(x)}{\hat{\pi}_a(x)} [1-\xi(x, a)] \nu(0, x, a) - \frac{\pi_a(x)}{\hat{\pi}_a(x)} [1-\hat{\xi}(x, a)]\hat{\nu}(0, x, a) +\hat{\nu}(0, x, a)[1-\hat{\xi}(x, a)] \\
    &+\frac{\pi_a(x)}{\hat{\pi}_a(x)} \xi(x, a) \nu(1, x, a) - \frac{\pi_a(x)}{\hat{\pi}_a(x)} \hat{\xi}(x, a)\hat{\nu}(1, x, a) +\hat{\nu}(1, x, a)\hat{\xi}(x, a) \\
    &+ \gamma(x, a) \frac{\pi_a(x)}{\hat{\pi}_a(x)} \xi(x,a)
    - \gamma(x, a)  \frac{\pi_a(x)}{\hat{\pi}_a(x)} \hat{\xi}(x, a) 
    + \gamma(x, a) \hat{\xi}(x, a).
\end{aligned}
\end{equation}}}

Hence, $\hat{\nu}(\delta, x, a) = \nu(\delta, x, a)$, $\hat{\xi}(x, a) = \xi(x, a)$ and $\hat{\pi}_a(x) = \pi_a(x)$ implies
\begin{equation}
\begin{aligned}
    &\E \left[ \hat{\phi}^{+}(x, a) \mid X =x\right]\\
    =&\frac{\pi_a(x)}{\hat{\pi}_a(x)} [1-\xi(x, a)] \nu(0, x, a) - \frac{\pi_a(x)}{\hat{\pi}_a(x)} [1-\hat{\xi}(x, a)]\hat{\nu}(0, x, a) +\hat{\nu}(0, x, a)[1-\hat{\xi}(x, a)] \\
    &+\frac{\pi_a(x)}{\hat{\pi}_a(x)} \xi(x, a) \nu(1, x, a) - \frac{\pi_a(x)}{\hat{\pi}_a(x)} \hat{\xi}(x, a)\hat{\nu}(1, x, a) +\hat{\nu}(1, x, a)\hat{\xi}(x, a) \\
    &+ \gamma(x, a) \frac{\pi_a(x)}{\hat{\pi}_a(x)} \xi(x,a)
    - \gamma(x, a)  \frac{\pi_a(x)}{\hat{\pi}_a(x)} \hat{\xi}(x, a) 
    + \gamma(x, a) \hat{\xi}(x, a)\\
    =&  [1-\xi(x, a)] \nu(0, x, a) + \xi(x, a) \nu(1, x, a) + \gamma(x, a) \xi(x,a)\\
    =& {\mu}^{+}(x, a).
\end{aligned}
\end{equation}
which proves the consistency.

Also, (1) under $\hat{\nu}(\delta, x, a) = \nu(\delta, x, a)$ and $\hat{\xi}(x, a) = \xi(x, a)$, this reduces to 
\begin{equation}
\begin{aligned}
&\E \left[ \hat{\phi}^{+}(x, a) \mid X =x\right]\\
=&\frac{\pi_a(x)}{\hat{\pi}_a(x)} [1-\xi(x, a)] \nu(0, x, a) - \frac{\pi_a(x)}{\hat{\pi}_a(x)} [1-\hat{\xi}(x, a)]\hat{\nu}(0, x, a) +\hat{\nu}(0, x, a)[1-\hat{\xi}(x, a)] \\
&+\frac{\pi_a(x)}{\hat{\pi}_a(x)} \xi(x, a) \nu(1, x, a) - \frac{\pi_a(x)}{\hat{\pi}_a(x)} \hat{\xi}(x, a)\hat{\nu}(0, x, a) +\hat{\nu}(1, x, a)\hat{\xi}(x, a) \\
&+ \gamma(x, a) \frac{\pi_a(x)}{\hat{\pi}_a(x)} \xi(x,a)
- \gamma(x, a)  \frac{\pi_a(x)}{\hat{\pi}_a(x)} \hat{\xi}(x, a) 
+ \gamma(x, a) \hat{\xi}(x, a)\\
=&\frac{\pi_a(x)}{\hat{\pi}_a(x)} [1-\xi(x, a)] \nu(0, x, a) 
- \frac{\pi_a(x)}{\hat{\pi}_a(x)} [1-{\xi}(x, a)]{\nu}(0, x, a) 
+{\nu}(0, x, a)[1-{\xi}(x, a)] \\
&+\frac{\pi_a(x)}{\hat{\pi}_a(x)} \xi(x, a) \nu(1, x, a) 
- \frac{\pi_a(x)}{\hat{\pi}_a(x)} {\xi}(x, a){\nu}(0, x, a) 
+{\nu}(1, x, a){\xi}(x, a) \\
&+ \gamma(x, a) \frac{\pi_a(x)}{\hat{\pi}_a(x)} \xi(x,a)
- \gamma(x, a)  \frac{\pi_a(x)}{\hat{\pi}_a(x)} {\xi}(x, a) 
+ \gamma(x, a) {\xi}(x, a)\\
=&  [1-\xi(x, a)] \nu(0, x, a) + \xi(x, a) \nu(1, x, a) + \gamma(x, a) \xi(x,a)\\
=& {\mu}^{+}(x, a).
\end{aligned}
\end{equation}

(2) Under $\hat{\pi}_a(x) = \pi_a(x)$, this reduces to
\begin{equation}
\begin{aligned}
&\E \left[ \hat{\phi}^{+}(x, a) \mid X =x\right]\\
=&\frac{\pi_a(x)}{\hat{\pi}_a(x)} [1-\xi(x, a)] \nu(0, x, a) - \frac{\pi_a(x)}{\hat{\pi}_a(x)} [1-\hat{\xi}(x, a)]\hat{\nu}(0, x, a) +\hat{\nu}(0, x, a)[1-\hat{\xi}(x, a)] \\
&+\frac{\pi_a(x)}{\hat{\pi}_a(x)} \xi(x, a) \nu(1, x, a) - \frac{\pi_a(x)}{\hat{\pi}_a(x)} \hat{\xi}(x, a)\hat{\nu}(0, x, a) +\hat{\nu}(1, x, a)\hat{\xi}(x, a) \\
&+ \gamma(x, a) \frac{\pi_a(x)}{\hat{\pi}_a(x)} \xi(x,a)
- \gamma(x, a)  \frac{\pi_a(x)}{\hat{\pi}_a(x)} \hat{\xi}(x, a) 
+ \gamma(x, a) \hat{\xi}(x, a)\\
=&\frac{\pi_a(x)}{{\pi}_a(x)} [1-\xi(x, a)] \nu(0, x, a) 
- \frac{\pi_a(x)}{{\pi}_a(x)} [1-\hat{\xi}(x, a)]\hat{\nu}(0, x, a) 
+\hat{\nu}(0, x, a)[1-\hat{\xi}(x, a)] \\
&+\frac{\pi_a(x)}{{\pi}_a(x)} \xi(x, a) \nu(1, x, a) 
- \frac{\pi_a(x)}{{\pi}_a(x)} \hat{\xi}(x, a)\hat{\nu}(1, x, a) 
+\hat{\nu}(1, x, a)\hat{\xi}(x, a) \\
&+ \gamma(x, a) \frac{\pi_a(x)}{{\pi}_a(x)} \xi(x,a)
- \gamma(x, a)  \frac{\pi_a(x)}{{\pi}_a(x)} \hat{\xi}(x, a) 
+ \gamma(x, a) \hat{\xi}(x, a)\\
=&  [1-\xi(x, a)] \nu(0, x, a) + \xi(x, a) \nu(1, x, a) + \gamma(x, a) \xi(x,a)\\
=& {\mu}^{+}(x, a).
\end{aligned}
\end{equation}

Finally, the pseudo-outcomes of \textbf{Case} \circledgreen{2}(\emph{conservative upper bounds}). is defined in Eq.~(\ref{eq:upper_non_informative_bound_learner}). We simplify the equation to

\begin{equation}
\begin{aligned}
&\hat{\phi}^{+}(x, a)\\
=& \frac{1(A = a, \Delta = 0)}{\hat{\pi}_a(x)}\{\tilde{T} - \hat{\nu}(0, x, a)\} + \frac{\hat{\nu}(0, x, a)1(A = a)}{\hat{\pi}_a(x)} \{1(\Delta = 0)- [1-\hat{\xi}(x, a)]\}\\
&+ \hat{\nu}(0, x, a)[1-\hat{\xi}(x, a)] + t_{\mathrm{max}} \frac{1(A = a)}{\hat{\pi}_a(x)} \left\{ 1(\Delta =1) - \hat{\xi}(x, a)\right\} + t_{\mathrm{max}}\hat{\xi}(x, a)\\
=& \frac{1(A = a, \Delta = 0)}{\hat{\pi}_a(x)}\tilde{T} - \frac{1(A = a)}{\hat{\pi}_a(x)}\hat{\nu}(0, x, a)[1-\hat{\xi}(x, a)]+ \hat{\nu}(0, x, a)[1-\hat{\xi}(x, a)]\\
&+  t_{\mathrm{max}} \frac{1(A = a, \Delta =1)}{\hat{\pi}_a(x)} -  t_{\mathrm{max}} \frac{1(A = a)}{\hat{\pi}_a(x)} \hat{\xi}(x, a) + t_{\mathrm{max}}\hat{\xi}(x, a).
\end{aligned}
\end{equation}

Again, by taking expectation conditional on $X = x$, $A = a$, we obtain

\begin{equation}\label{eq:pseudo-outcome}
\begin{aligned}
&\E[\hat{\phi}^{+}(x, a)\mid X = x]\\
=& \frac{\pi_a(x)}{\hat{\pi}_a(x)} [1-\xi(x, a)] \nu(0, x, a) 
- \frac{\pi_a(x)}{\hat{\pi}_a(x)} [1-\hat{\xi}(x, a)]\hat{\nu}(0, x, a) 
+ \hat{\nu}(0, x, a)[1-\hat{\xi}(x, a)]\\
& + t_{\mathrm{max}} \frac{\pi_a(x)}{\hat{\pi}_a(x)} \xi(x,a)
- t_{\mathrm{max}}  \frac{\pi_a(x)}{\hat{\pi}_a(x)} \hat{\xi}(x, a) 
+ t_{\mathrm{max}} \hat{\xi}(x, a).\\
\end{aligned}
\end{equation}

Hence, using the fact that $\hat{\nu}(\delta, x, a) = \nu(\delta, x, a)$, $\hat{\xi}(x, a) = \xi(x, a)$ and $\hat{\pi}_a(x) = \pi_a(x)$ implies
\begin{equation}
\begin{aligned}
&\E\left[\hat{\phi}^{+}(x, a) \mid X = x\right]\\
=& [1-\xi(x, a)] \nu(0, x, a) -  [1-{\xi}(x, a)]{\nu}(0, x, a) +{\nu}(0, x, a)[1-{\xi}(x, a)]\\
& + t_{\mathrm{max}} \xi(x,a)
- t_{\mathrm{max}} {\xi}(x, a) 
+ t_{\mathrm{max}} {\xi}(x, a)\\
=& [1-\xi(x, a)] \nu(0, x, a) + t_{\mathrm{max}} {\xi}(x, a)\\
=&{\mu}^{+}(x, a),
\end{aligned}
\end{equation}
which proves consistency.

Again, (1) under $\hat{\nu}(\delta, x, a) = \nu(\delta, x, a)$ and $\hat{\xi}(x, a) = \xi(x, a)$, this reduces to 
\begin{equation}
\begin{aligned}
&\E[\hat{\phi}^{+}(x, a)\mid X = x]\\
=& \frac{\pi_a(x)}{\hat{\pi}_a(x)} [1-\xi(x, a)] \nu(0, x, a) 
- \frac{\pi_a(x)}{\hat{\pi}_a(x)} [1-\hat{\xi}(x, a)]\hat{\nu}(0, x, a) 
+ \hat{\nu}(0, x, a)[1-\hat{\xi}(x, a)]\\
& + t_{\mathrm{max}} \frac{\pi_a(x)}{\hat{\pi}_a(x)} \xi(x,a)
- t_{\mathrm{max}}  \frac{\pi_a(x)}{\hat{\pi}_a(x)} \hat{\xi}(x, a) 
+ t_{\mathrm{max}} \hat{\xi}(x, a)\\
=& \frac{\pi_a(x)}{\hat{\pi}_a(x)} [1-\xi(x, a)] \nu(0, x, a) 
- \frac{\pi_a(x)}{\hat{\pi}_a(x)} [1-{\xi}(x, a)]{\nu}(0, x, a) 
+ {\nu}(0, x, a)[1-{\xi}(x, a)]\\
& + t_{\mathrm{max}} \frac{\pi_a(x)}{\hat{\pi}_a(x)} \xi(x,a)
- t_{\mathrm{max}}  \frac{\pi_a(x)}{\hat{\pi}_a(x)} {\xi}(x, a) 
+ t_{\mathrm{max}} {\xi}(x, a)\\
=& \hat{\nu}(0, x, a)[1-\hat{\xi}(x, a)+ t_{\mathrm{max}} {\xi}(x, a)\\
=& {\mu}^{+}(x, a),
\end{aligned}
\end{equation}

(2) Under $\hat{\pi}_a(x) = \pi_a(x)$, this reduces to
\begin{equation}
\begin{aligned}
&\E[\hat{\phi}^{+}(x, a)\mid X = x]\\
=& \frac{\pi_a(x)}{\hat{\pi}_a(x)} [1-\xi(x, a)] \nu(0, x, a) 
- \frac{\pi_a(x)}{\hat{\pi}_a(x)} [1-\hat{\xi}(x, a)]\hat{\nu}(0, x, a) 
+ \hat{\nu}(0, x, a)[1-\hat{\xi}(x, a)]\\
& + t_{\mathrm{max}} \frac{\pi_a(x)}{\hat{\pi}_a(x)} \xi(x,a)
- t_{\mathrm{max}}  \frac{\pi_a(x)}{\hat{\pi}_a(x)} \hat{\xi}(x, a) 
+ t_{\mathrm{max}} \hat{\xi}(x, a)\\
=& \frac{\pi_a(x)}{{\pi}_a(x)} [1-\xi(x, a)] \nu(0, x, a) 
- \frac{\pi_a(x)}{{\pi}_a(x)} [1-\hat{\xi}(x, a)]\hat{\nu}(0, x, a) 
+ \hat{\nu}(0, x, a)[1-\hat{\xi}(x, a)]\\
& + t_{\mathrm{max}} \frac{\pi_a(x)}{{\pi}_a(x)} \xi(x,a)
- t_{\mathrm{max}}  \frac{\pi_a(x)}{{\pi}_a(x)} \hat{\xi}(x, a) 
+ t_{\mathrm{max}} \hat{\xi}(x, a)\\
=& {\nu}(0, x, a)[1-{\xi}(x, a)+ t_{\mathrm{max}} {\xi}(x, a)\\
=& {\mu}^{+}(x, a).
\end{aligned}
\end{equation}
This proves double robustness again. 
\end{proof}

\newpage
\subsection{Proof of Theorem~\ref{thm:oracle-efficiency}}
\label{app:proof_quasi-orcale_efficiency}

\begin{lemma}\label{lem:nuisance_function_error}
Consider the setting described in Theorem~\ref{thm:oracle-efficiency}. Then, we have
\begin{equation}
\begin{aligned}
&\E\left[ (\hat{\mu}^{+}(x, a) - \mu^{+}(x, a))^2 \mid X = x \right]\\
&\lesssim \mathcal{R}(x)
+ \E \left[ (\hat{\pi}_a(x) - \pi_a(x))^2 \right] \left(  \E \left[(\hat{\nu}(0, x, a) - \nu(0, x, a))^2\right]  + \E \left[(\hat{\nu}(1, x, a) - {\nu}(1, x, a))^2\right] \right.\\
& \left.+ \E \left[(\hat{\xi}(x, a) - {\xi}(x, a))^2\right] \right).\\
\end{aligned}
\end{equation}
\end{lemma}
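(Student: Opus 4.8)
The plan is to control the pointwise mean-squared error of the pseudo-outcome through the variance--bias decomposition, conditioning first on the fitted nuisances and then integrating over the training folds. Write $\bar{\mathcal{D}}=(\mathcal{D}_1,\mathcal{D}_2)$ for the data used to fit the nuisances, and recall that by cross-fitting $\hat{\nu}(\cdot,x,a)$ and $\hat{\xi}(x,a)$ depend only on $\mathcal{D}_1$, while $\hat{\pi}_a(x)$ depends only on $\mathcal{D}_2$. For a fresh draw $(A,\tilde{T},\Delta)\mid X=x$, I would decompose
\begin{equation}
\E\!\left[(\hat{\phi}^{+}(x, a)-\mu^{+}(x, a))^2\mid X=x\right]
=\underbrace{\E_{\bar{\mathcal{D}}}\!\left[\Var\!\left(\hat{\phi}^{+}\mid\bar{\mathcal{D}},X=x\right)\right]}_{=:~\mathcal{R}(x)}
+\E_{\bar{\mathcal{D}}}\!\left[b(x)^2\right],
\end{equation}
where $b(x)=\E[\hat{\phi}^{+}\mid\bar{\mathcal{D}},X=x]-\mu^{+}(x, a)$ is the conditional bias. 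The first summand is the oracle-type variance remainder, which is collected into $\mathcal{R}(x)$; it is finite because overlap (Assumption~\ref{assumption:boundedness}(2)) bounds $1/\hat{\pi}_a$ and the boundedness of $\nu$, $\gamma$, and the indicators keeps $\hat{\phi}^{+}$ bounded. All of the substantive work is therefore in controlling $\E_{\bar{\mathcal{D}}}[b(x)^2]$.

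Next I would compute $b(x)$ explicitly, reusing the calculation already carried out in the proof of Theorem~\ref{thm:DR-property}. Taking $\E[\,\cdot\mid\bar{\mathcal{D}},X=x]$ of Eq.~(\ref{eq:upper_gamma_bound_learner}) and writing $r=\pi_a(x)/\hat{\pi}_a(x)$, the estimated plug-in value $\hat{\mu}^{+}_{\mathrm{pl}}=\hat{\nu}(0,x,a)[1-\hat{\xi}(x,a)]+\hat{\nu}(1,x,a)\hat{\xi}(x,a)+\gamma(x,a)\hat{\xi}(x,a)$ and the true target combine so that
\begin{equation}
\E[\hat{\phi}^{+}\mid\bar{\mathcal{D}},X=x]=r\,\mu^{+}(x, a)+(1-r)\,\hat{\mu}^{+}_{\mathrm{pl}},
\qquad\text{hence}\qquad
b(x)=(r-1)\left(\mu^{+}(x, a)-\hat{\mu}^{+}_{\mathrm{pl}}\right).
\end{equation}
Since $r-1=(\pi_a(x)-\hat{\pi}_a(x))/\hat{\pi}_a(x)$, this already exposes the crucial second-order (product) structure: the bias is a product of the propensity error and the combined outcome/censoring error. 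Establishing this orthogonality cleanly is the heart of the argument and the main obstacle, as it is precisely what upgrades the consistency of Theorem~\ref{thm:DR-property} to an error that is \emph{quadratic} in the nuisance errors rather than linear.

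Finally I would expand the second factor into individual nuisance errors by an add--subtract (telescoping) step. With the shorthand $\nu_\delta=\nu(\delta,x,a)$, $\hat{\nu}_\delta=\hat{\nu}(\delta,x,a)$, $\xi=\xi(x,a)$, $\hat{\xi}=\hat{\xi}(x,a)$, one obtains
\begin{equation}
\mu^{+}-\hat{\mu}^{+}_{\mathrm{pl}}
=(1-\xi)(\nu_0-\hat{\nu}_0)+\xi(\nu_1-\hat{\nu}_1)+\big(\gamma+\hat{\nu}_1-\hat{\nu}_0\big)(\xi-\hat{\xi}).
\end{equation}
Under Assumption~\ref{assumption:boundedness} the coefficients are uniformly bounded and $1/\hat{\pi}_a(x)\le 1/\varepsilon$, so that $|b(x)|\lesssim|\pi_a-\hat{\pi}_a|\,(|\nu_0-\hat{\nu}_0|+|\nu_1-\hat{\nu}_1|+|\xi-\hat{\xi}|)$; squaring and applying $(u+v+w)^2\le 3(u^2+v^2+w^2)$ yields
\begin{equation}
b(x)^2\lesssim(\pi_a-\hat{\pi}_a)^2\big[(\nu_0-\hat{\nu}_0)^2+(\nu_1-\hat{\nu}_1)^2+(\xi-\hat{\xi})^2\big].
\end{equation}
Taking $\E_{\bar{\mathcal{D}}}$ and using that $\hat{\pi}_a$ (a function of $\mathcal{D}_2$) is independent of $\hat{\nu}_0,\hat{\nu}_1,\hat{\xi}$ (functions of $\mathcal{D}_1$) factorizes each product term into a product of expectations, giving exactly the claimed bound. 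The lower-bound pseudo-outcome and the conservative case follow by the identical chain of steps with $\gamma(x,a)$ replaced by $0$ or by $t_{\mathrm{max}}-\nu(1,x,a)$, respectively.
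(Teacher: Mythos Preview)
Your treatment of the bias term is essentially the paper's argument: compute $b(x)=\E[\hat{\phi}^{+}\mid\bar{\mathcal{D}},X=x]-\mu^{+}$, factor it as a product of the propensity error and the plug-in error, telescope the latter into the three nuisance differences, square, apply $(u+v+w)^2\le 3(u^2+v^2+w^2)$, and use the $\mathcal{D}_1$/$\mathcal{D}_2$ independence from sample splitting to factor the expectation. Your compact identity $\E[\hat{\phi}^{+}\mid\bar{\mathcal{D}},X=x]=r\,\mu^{+}+(1-r)\,\hat{\mu}^{+}_{\mathrm{pl}}$, which immediately gives $b(x)=(r-1)(\mu^{+}-\hat{\mu}^{+}_{\mathrm{pl}})$, is in fact cleaner than the paper's longer algebra to the same product form; the paper telescopes with the true $\nu_0,\nu_1$ in the $\xi$-coefficient rather than $\hat{\nu}_0,\hat{\nu}_1$, but either choice works once bounded.

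Where you diverge is the meaning of $\mathcal{R}(x)$. You obtain it from a direct bias--variance decomposition of the pseudo-outcome MSE and set $\mathcal{R}(x)=\E_{\bar{\mathcal{D}}}[\Var(\hat{\phi}^{+}\mid\bar{\mathcal{D}},X=x)]$. The paper instead invokes Proposition~1 of \citet{kennedy.2023} as a black box and takes $\mathcal{R}(x)$ to be the \emph{oracle risk of the second-stage regression}, namely $\E[(\tilde{\mu}^{+}-\mu^{+})^2]$ with $\tilde{\mu}^{+}=\hat{\E}_n[\phi^{+}\mid X=x]$. That is the quantity which, under the smoothness assumption, gives the first summand $n^{-2\eta/(2\eta+p)}$ in Theorem~\ref{thm:oracle-efficiency}. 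Your conditional-variance version is a perfectly valid decomposition of the left-hand side as literally written, but it does not by itself connect to the second-stage rate; along your route you would still need Kennedy's stability argument (or an equivalent) to pass from the pseudo-outcome to $\hat{\mu}^{+}$. So the substantive bias calculation matches the paper, but be aware that the paper's $\mathcal{R}(x)$ carries a different---and, for the downstream theorem, the directly usable---meaning.
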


\begin{proof}

Let $\mu^{+}(x, a)$ be the corresponding oracle to $\hat{\mu}^{+}(x, a)$. Further ,we define $\tilde{\mu}^{+}(x, a) = \hat{\E}_n[{\phi}^{+}(x, a)\mid X = x]$. Using the assumption, we can apply Proposition 1 of \citep{kennedy.2023} and obtain 
\begin{equation}
\begin{aligned}
\E\left[ (\hat{\mu}^{+}(x, a) - \mu^{+}(x, a))^2 \mid X = x \right]
\lesssim
\mathcal{R}(x,a) + \E \left[\hat{r}(x, a)^2 \right],
\end{aligned}
\end{equation}

where $\mathcal{R}(x,a) = \E \left[ ({\mu}^{+}(x, a) - \tilde{\mu}^{+}(x, a))^2 \right]$ is the oracle risk of the second stage regression. We can apply Eq.~(\ref{eq:pseudo-outcome}) in the proof of Theorem~\ref{thm:DR-property} Section~\ref{app:proof_DR_property} to obtain the following.

\textbf{Case}\circledgreen{1}: domain-knowledge upper bound.
\begin{equation}
\begin{aligned}
&\hat{r}(x, a)\\
=&\frac{\pi_a(x)}{\hat{\pi}_a(x)} [1-\xi(x, a)] \nu(0, x, a) - \frac{\pi_a(x)}{\hat{\pi}_a(x)} [1-\hat{\xi}(x, a)]\hat{\nu}(0, x, a) +\hat{\nu}(0, x, a)[1-\hat{\xi}(x, a)] \\
&+\frac{\pi_a(x)}{\hat{\pi}_a(x)} \xi(x, a) \nu(1, x, a) - \frac{\pi_a(x)}{\hat{\pi}_a(x)} \hat{\xi}(x, a)\hat{\nu}(0, x, a) +\hat{\nu}(1, x, a)\hat{\xi}(x, a) \\
&+ \gamma(x, a) \frac{\pi_a(x)}{\hat{\pi}_a(x)} \xi(x,a)
- \gamma(x, a)  \frac{\pi_a(x)}{\hat{\pi}_a(x)} \hat{\xi}(x, a) 
+ \gamma(x, a) \hat{\xi}(x, a) - \mu^{+}(x, a)\\
=& \left\{\frac{\pi_a(x)}{\hat{\pi}_a(x)} - 1 \right\}
\left\{\nu(0, x, a) \left[\hat{\xi}(x, a) - {\xi}(x, a)\right]+ [1-\hat\xi(x, a)] \left[\nu(0, x, a) -\hat{\nu}(0, x, a) \right]\right\}\\
&+\left\{\frac{\pi_a(x)}{\hat{\pi}_a(x)} - 1 \right\} 
\left\{\nu(1, x, a) \left[\xi(x, a) - \hat{\xi}(x, a)\right] +  \hat{\xi}(x, a) \left[\nu(1, x, a) - \hat{\nu}(1, x, a) \right] \right\}\\
&+\left\{\frac{\pi_a(x)}{\hat{\pi}_a(x)} - 1 \right\} \gamma(x, a) 
\left[ \xi(x,a) - \hat{\xi}(x, a)\right]\\
=&\left\{\frac{\pi_a(x)}{\hat{\pi}_a(x)} - 1 \right\} 
\left\{\left[\nu(1, x, a) - \gamma(x, a) - \nu(0, x, a)\right] \left[\hat\xi(x, a) - {\xi}(x, a)\right] \right.\\
&\left. + [1-\hat\xi(x, a)] \left[\nu(0, x, a) -\hat{\nu}(0, x, a) \right]
+ \hat{\xi}(x, a)[\hat{\nu}(1, x, a) - {\nu}(1, x, a)]\right\}.
\end{aligned}
\end{equation}

Applying the inequality $(a+b+c)^2 \leq 3(a^2+b^2+c^2)$ together with Assumption~\ref{assumption:boundedness} from and the fact that $\pi_a(x)\leq 1$ yields

{\small{\begin{equation}
\begin{aligned}
\hat{r}(x,a)^2
=& \left\{\frac{\pi_a(x)}{\hat{\pi}_a(x)} - 1 \right\}^2
\left\{\left[\nu(1, x, a) + \gamma(x, a) - \nu(0, x, a)\right] \left[\hat\xi(x, a) - {\xi}(x, a)\right]\right.\\
&\left. + [1-\hat\xi(x, a)] \left[\nu(0, x, a) -\hat{\nu}(0, x, a) \right]
+ \hat{\xi}(x, a)[\hat{\nu}(1, x, a) - {\nu}(1, x, a)]\right\}^2\\
\leq& 3
\left\{\frac{\pi_a(x)}{\hat{\pi}_a(x)} - 1 \right\}^2 
\left\{\left[\nu(1, x, a) + \gamma(x, a) - \nu(0, x, a)\right]^2 \left[\hat\xi(x, a) - {\xi}(x, a)\right]^2\right.\\
&\left.+ [1-\hat\xi(x, a)]^2 \left[\nu(0, x, a) -\hat{\nu}(0, x, a) \right]^2 
+ \hat{\xi}(x, a)^2[\hat{\nu}(1, x, a) - {\nu}(1, x, a)]^2\right\}\\
=& 3
\left\{\frac{\pi_a(x)}{\hat{\pi}_a(x)} - 1 \right\}^2 
\left[\nu(1, x, a) + \gamma(x, a) - \nu(0, x, a)\right]^2 \left[\hat\xi(x, a) - {\xi}(x, a)\right]^2\\
&+3
\left\{\frac{\pi_a(x)}{\hat{\pi}_a(x)} - 1 \right\}^2 
[1-\hat\xi(x, a)]^2 \left[\nu(0, x, a) -\hat{\nu}(0, x, a) \right]^2\\
&+3
\left\{\frac{\pi_a(x)}{\hat{\pi}_a(x)} - 1 \right\}^2 
\hat{\xi}(x, a)^2\left[\hat{\nu}(1, x, a) - {\nu}(1, x, a)\right]^2.
\end{aligned}
\end{equation}
}}

By using Assumption~\ref{assumption:boundedness}, which gives 
$\hat{\pi}_a(x) > \varepsilon$ and thus 
$\left\{\frac{\pi_a(x)}{\hat{\pi}_a(x)} - 1\right\}^2 
= \frac{(\pi_a(x) - \hat{\pi}_a(x))^2}{\hat{\pi}_a(x)^2} 
\leq \frac{1}{\varepsilon^2}(\pi_a(x) - \hat{\pi}_a(x))^2$, 
together with the boundedness of $\nu$, $\xi$, and $\gamma$, we obtain
\begin{equation}
\begin{aligned}
\hat{r}(x,a)^2
\leq& \frac{3(2C + t_{\mathrm{max}})^2}{\varepsilon^2}
\left[\hat{\pi}_a(x) - {\pi}_a(x)\right]^2
\left[\hat{\xi}(x, a) - {\xi}(x, a)\right]^2 \\
&+ \frac{3}{\varepsilon^2}\left[\hat{\pi}_a(x) - {\pi}_a(x)\right]^2
\left[\nu(0, x, a) - \hat{\nu}(0, x, a)\right]^2 \\
&+ \frac{3}{\varepsilon^2}\left[\hat{\pi}_a(x) - {\pi}_a(x)\right]^2
\left[\hat{\nu}(1, x, a) - {\nu}(1, x, a)\right]^2.
\end{aligned}
\end{equation}
Applying expectations on both sides yields the result, 
because $\hat{\pi}_a(x) \indep (\hat{\nu}(\delta, x, a), \hat{\xi}(x, a))$ due to sample splitting.

\end{proof}

\newpage
\section{Extensions}

\subsection{Setting with continuous treatments}
\label{app:continuous_setting}

In many applications, treatments are not discrete but continuous (e.g., dosage levels, duration of therapy). Below, we extend our \method to settings with continuous treatments where $A \in \gA \subseteq \mathbb{R}$. 

The key technical issue is the following. In the discrete case, we use the indicator $\mathbbm{1}(A = a)$ weighted by the propensity score $\pi_a(x)$. For continuous treatments, the treatment variable $A$ has a density with respect to the Lebesgue measure, so the point-mass indicator $\mathbbm{1}(A = a)$ has probability zero and dividing by the generalized propensity score $f_{A \mid X}(a \mid x)$ no longer yields a well-behaved influence function. Our solution is to replace $\mathbbm{1}(A = a)$ with a smooth, symmetric kernel function $K_h(A - a)$, following standard practice in semiparametric estimation for continuous treatment~\citep{ichimura.2021, colangelo.2023}. This kernel smoothing localizes the pseudo-outcome around the target treatment value $a$ and ensures stable estimation.

Our solution to overcome this is the following. We approximate the Dirac delta $\delta(\cdot)$ with a smooth, symmetric kernel function $K_h(t)$. Thereby, we follow standard practice in semiparametric estimation for continuous treatments \citep{ichimura.2021, colangelo.2023}. This kernel smoothing makes the localization well-behaved and ensures stable estimation.

With the above modification, our partial identification framework, including the domain-knowledge upper bounds, extends naturally to continuous treatments, as shown in Eq.~(\ref{eq:upper_gamma_bound_continuous}). Formally, the pseudo-outcome under a continuous treatment setting is defined via
{\small{\begin{equation}\label{eq:upper_gamma_bound_continuous}
\begin{aligned}
    &\hat{\phi}^{+}(x, a)\\
    =& \frac{1(\Delta = 0)K_h(A-a)}{\hat{f}_{A\mid X}( a \mid x)}\{\tilde{T} - \hat{\nu}(0, x, a)\} 
    + \frac{\hat{\nu}(0, x, a)K_h(A-a)}{\hat{f}_{A\mid X}( a \mid x)} \{1(\Delta = 0)- [1-\hat{\xi}(x, a)]\} + \hat{\nu}(0, x, a)[1-\hat{\xi}(x, a)]\\
    &+ \frac{1(\Delta = 1)K_h(A-a)}{\hat{f}_{A\mid X}( a \mid x)}\{\tilde{T} - \hat{\nu}(1, x, a)\} 
    + \frac{\hat{\nu}(1, x, a)K_h(A-a)}{\hat{f}_{A\mid X}( a \mid x)} \{1(\Delta = 1)- \hat{\xi}(x, a)\} + \hat{\nu}(1, x, a)\hat{\xi}(x, a)\\
    &+ \gamma(x, a) \frac{K_h(A-a)}{\hat{f}_{A\mid X}( a \mid x)} \left\{ 1(\Delta = 1) - \hat{\xi}(x, a)\right\} + \gamma(x, a) \hat{\xi}(x, a),
\end{aligned}
\end{equation}}}
where $ \hat{f}_{A\mid X}( a \mid x) $ is an estimator of $f_{A\mid X}( a \mid x) $, which is the generalized propensity score and the $K_h(A-a) = k(\frac{A-a}{h})/h$ is the kernel with standard second-order kernel function $k(\cdot)$ and bandwidth $h$. As a result, our \method meta-learner for the continuous setting continues to enjoy the desirable properties established in the discrete setting -- most notably, consistency and double robustness -- which ensures that the bounds are reliably estimated even under model misspecification.

\newpage
\subsection{Hidden confounding}
\label{app:hidden_confounding}

In this section, we provide an extension for observational data (e.g., patient registries, electronic health records) that can be subject to hidden confounding. To this end, we consider data with hidden confounding $U$, where the full population is $(X, A, T, C, U) \sim \sP$. For simplicity, we consider the binary treatment setting in the following derivation, i.e., $A \in \{0, 1\}$. Given the setting, we are interested in the lower bound in Eq.~(\ref{eq:lower_bound}) and the upper bound in Eq.~(\ref{eq:upper_gamma_bound}) of the form
\begin{equation}
\begin{aligned}
\gQ_{\mathrm{LB}}(x, a, \gM) &= \mathbb{E}_{\mathbb{P}(\cdot \mid x, \mathrm{do}(A=a))}[\tilde{T}],\\
\gQ_{\mathrm{UB}}(x, a, \gM) &= \mathbb{E}_{\mathbb{P}(\cdot \mid x, \mathrm{do}(A=a))}[\tilde{T}] 
+ \gamma(x, a)\cdot\mathbb{P}(\Delta = 1 \mid x, \mathrm{do}(A = a)).
\end{aligned}
\end{equation}

\begin{definition}(SCM)
Let $\mathcal{V} = \{V_1, \ldots, V_n\}$ denote a set of observed endogenous variables, 
let $\mathcal{U} \sim P_{\mathcal{U}}$ denote a set of unobserved exogenous variables, 
and let $\mathcal{F} = \{f_{V_1}, \ldots, f_{V_n}\}$ with $f_{V_i} : \mathrm{Pa}(V_i) \subseteq \mathcal{V} \cup \mathcal{U} \to V_i$. 
The tuple $(\mathcal{V}, \mathcal{U}, \mathcal{F}, P_{\mathcal{U}})$ is called a structural causal model (SCM).
\end{definition}

We assume a directed graph $\mathcal{G}_{\mathcal{C}}$ induced by SCM $\mathcal{C}$ to be acyclic.

\begin{definition} (Generalized marginal sensitivity model (GMSM)).
Let $\mathbf{V} = \{\mathbf{X}, \mathbf{A}, \mathbf{T}, \mathbf{C}\}$. Let $\mathbf{X}$, $\mathbf{A}$, $\mathbf{T}$, and $\mathbf{C}$ denote a set of observed endogenous variables, $\mathbf{U}$ a set of unobserved exogenous variables, and $\mathcal{G}$ a causal directed acyclic graph (DAG) on $\mathbf{V} \bigcup \mathbf{U}$. For an observational distribution $P_{\mathbf{V}}$ on $\mathbf{V}$ and a family $\mathcal{P}$ of joint probability distribution on  $\mathbf{V} \bigcup \mathbf{U}$ that satisfy
\begin{equation}
\begin{aligned}
\frac{1}{(1-\Gamma) \mathbb{P}(a\mid x) + \Gamma} \leq \frac{P(U = u \mid x, a)}{P(U = u \mid x, \mathrm{do}(A= a))} \leq \frac{1}{(1-\Gamma^{-1}) \mathbb{P}(a\mid x) + \Gamma^{-1}}.
\end{aligned}
\end{equation}
\end{definition}

Using the definition above, we aim to obtain bounds $\gQ^{-}(x, a, \gS) \leq \gQ^{+}(x, a, \gS)$ so that
\begin{equation}
\begin{aligned}
\gQ_{\mathrm{LB}}^{-}(x, a, \gS) &= \inf_{\gM\in \gC(\gS)} \gQ_{\mathrm{LB}}(x, a, \gM) \text{ and } \gQ_{\mathrm{LB}}^{+}(x, a, \gS) = \sup_{\gM\in \gC(\gS)} \gQ_{\mathrm{LB}}(x, a, \gM)\\
\gQ_{\mathrm{UB}}^{-}(x, a, \gS) &= \inf_{\gM\in \gC(\gS)} \gQ_{\mathrm{UB}}(x, a, \gM) \text{ and } \gQ_{\mathrm{UB}}^{+}(x, a, \gS) = \sup_{\gM\in \gC(\gS)} \gQ_{\mathrm{UB}}(x, a, \gM).
\end{aligned}
\end{equation}

\begin{assumption}
We assume the conditional unconfoundedness, which is $(T(a), C(a)) \indep A \mid X, U$, $\forall a$.
\end{assumption}

\begin{theorem}
\label{thm:gmsm}
Let $\mathcal{S}$ be a GMSM with bounds $s_W^{-} = \frac{1}{(1-\Gamma) \mathbb{P}(a\mid x) + \Gamma}$ and $s_W^{+} = \frac{1}{(1-\Gamma^{-1}) \mathbb{P}(a\mid x) + \Gamma^{-1}}$, for $W \in \{T, C\}$. We define $c_W^{+} = \frac{(1-s_W^{-})s_W^{+}}{s_W^{+} - s_W^{-}} $. Since $W \in \{T, C\} \in \mathbb{R}$ is continuous, we define the probability density function
\begin{equation}
\begin{aligned}
\mathbb{P}_{+}(w\mid x, a) = 
\begin{cases}
\dfrac{1}{s^{+}_{W}}\;\mathbb{P}(w \mid x, a), & \text{if } F(w)\le c^{+}_{W},\\
\dfrac{1}{s^{-}_{W}}\;\mathbb{P}(w \mid x, a), & \text{if } F(w)> c^{+}_{W}.
\end{cases}
\end{aligned}
\end{equation}
where $F_{+}(\cdot)$ denote the conditional CDF corresponding to $\mathbb{P}_{+}(w\mid x, a)$. Further let $F_{-}(\cdot)$ be the conditional CDF corresponding to $\mathbb{P}_{-}(w\mid x, a)$, which is defined by swapping signs (in $c_W$ and $s_W$). For any SCM $\mathcal{M}$, we denote the CDF corresponding to $\mathbb{P}_{\mathcal{M}}(\cdot \mid x, \mathrm{do}(A = a))$ by $F_{\mathcal{M}}(\cdot)$. Then, for all $w$, we have
\begin{equation}
\label{eq:msm_bound_cdf}
\begin{aligned}
F_{+}(w) \leq \inf_{\gM\in \gC(\gS)} F_{\gM}(w) \text{ and } F_{-}(w) \geq \sup_{\gM\in \gC(\gS)} F_{\gM}(w).
\end{aligned}
\end{equation}
Assume now that $\mathbb{P}(u \mid x, \mathrm{do}(A = a)) = \mathbb{P}(u \mid x)$. Then, the bounds are sharp (i.e., equality holds in Eq.~(\ref{eq:msm_bound_cdf}) whenever $A$ is discrete and it holds that $1/s_W^+ \geq \mathbb{P}(a\mid x)$.
\end{theorem}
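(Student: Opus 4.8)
The plan is to reduce the interventional problem to a one-dimensional constrained reweighting of the observational law of $W \in \{T, C\}$, to solve the resulting extremal problem by a single-crossing argument, and finally to check that the extremal reweighting is attained by an admissible SCM.

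First I would convert the confounder-level constraint of the GMSM into a pointwise bound on the outcome-level density ratio. Writing $p(w \mid x, \mathrm{do}(a)) = \int p(w \mid x, a, u)\, p(u \mid x, \mathrm{do}(a))\, \mathrm{d}u$ and likewise for the observational density, the ratio $s(w) := p(w \mid x, \mathrm{do}(a))/p(w \mid x, a)$ equals the posterior average of $p(u \mid x, \mathrm{do}(a))/p(u \mid x, a)$ under $p(u \mid w, x, a)$. Since the GMSM forces the confounder ratio into $[1/s_W^{+}, 1/s_W^{-}]$, this average satisfies $s(w) \in [1/s_W^{+}, 1/s_W^{-}]$ for all $w$, while $\int s(w)\, p(w \mid x, a)\, \mathrm{d}w = 1$. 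Hence every $\mathcal{M} \in \mathcal{C}(\mathcal{S})$ induces a shift $s$ in the feasible set $\mathcal{W} = \{ s : 1/s_W^{+} \le s \le 1/s_W^{-},\ \int s\, \mathrm{d}\mathbb{P}(\cdot \mid x, a) = 1 \}$, with $F_{\mathcal{M}}(w) = \int_{w' \le w} s(w')\, p(w' \mid x, a)\, \mathrm{d}w'$.

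Next I would show that the threshold shift $s_{+}$ defining $\mathbb{P}_{+}$ is stochastically largest over $\mathcal{W}$, so that $F_{+}(w) \le F_{\mathcal{M}}(w)$ for every $w$. Here $s_{+} = 1/s_W^{+}$ on $\{ F \le c_W^{+}\}$ and $s_{+} = 1/s_W^{-}$ on $\{ F > c_W^{+}\}$, and imposing $\int s_{+}\, \mathrm{d}\mathbb{P} = 1$ forces exactly $c_W^{+} = (1 - s_W^{-}) s_W^{+} / (s_W^{+} - s_W^{-})$, which is the stated constant. Fixing $w$ and any feasible $s$, set $\Delta(w) = \int_{w' \le w} (s_{+} - s)\, \mathrm{d}\mathbb{P}$. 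If $w$ lies below the $c_W^{+}$-quantile, then $s_{+} = 1/s_W^{+} \le s$ throughout $\{w' \le w\}$, so $\Delta(w) \le 0$ directly; if $w$ lies above it, the normalization $\int (s_{+} - s)\, \mathrm{d}\mathbb{P} = 0$ gives $\Delta(w) = -\int_{w' > w}(s_{+} - s)\, \mathrm{d}\mathbb{P}$, and there $s_{+} = 1/s_W^{-} \ge s$, so $\Delta(w) \le 0$ again. Thus $F_{+}(w) \le F_{\mathcal{M}}(w)$ for all $w$, yielding $F_{+} \le \inf_{\mathcal{M}} F_{\mathcal{M}}$; the claim for $F_{-}$ follows identically with the two weights swapped.

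The hard part will be sharpness. The inequality above only uses that SCM-induced shifts form a subset of $\mathcal{W}$, whereas sharpness requires constructing some $\mathcal{M} \in \mathcal{C}(\mathcal{S})$ whose interventional outcome law is exactly $\mathbb{P}_{+}$. Under the reduction $p(u \mid x, \mathrm{do}(a)) = p(u \mid x)$, I would take $U$ to encode the threshold event $\{ F(W) \le c_W^{+}\}$, so that the posterior $p(u \mid w, x, a)$ degenerates and the outcome shift collapses onto the confounder ratio; choosing that ratio to hit the endpoints $s_W^{\pm}$ on the two regions then reproduces $s_{+}$. The role of $A$ being discrete together with $1/s_W^{+} \ge \mathbb{P}(a \mid x)$ is precisely to guarantee that these extreme ratios are compatible with a nonnegative confounder law and a valid propensity, i.e. that the construction respects the observed margins; verifying this compatibility, and that the induced shift is measurable in $W$, is the delicate step, and I expect it to parallel the sharpness proof for the marginal sensitivity model while tracking the propensity-dependent bounds $s_W^{\pm}$.
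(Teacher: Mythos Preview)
The paper does not give its own proof of this theorem: its entire argument is the single line ``We adopt the proof of Theorem~1 from \citet{frauen.2023a}.'' Your proposal, by contrast, writes out a self-contained sketch, and that sketch is the standard route for MSM-type CDF bounds and is essentially what the cited reference does: (i) push the confounder-ratio constraint forward to a pointwise outcome-density-ratio constraint $s(w)\in[1/s_W^{+},1/s_W^{-}]$ with $\int s\,\mathrm d\mathbb{P}=1$ via the posterior-average identity; (ii) solve the resulting linear program over shifts by the two-piece threshold rule, checking both quantile regimes with the normalization trick; and (iii) for sharpness, build an SCM whose latent $U$ deterministically encodes the threshold event so that the extremal outcome shift is realized by an extremal confounder shift. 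Your computation of $c_W^{+}$ from the unit-mass constraint and the two-case argument for $\Delta(w)\le 0$ are both correct.

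The only place where your plan is thin is exactly where you flag it: in the sharpness construction you need to verify that the extreme ratios $1/s_W^{\pm}$ are simultaneously attainable by a bona fide joint law on $(U,A)$ given $X=x$ that reproduces the observed propensity $\mathbb{P}(a\mid x)$ at every other treatment level. This is where discreteness of $A$ and the condition $1/s_W^{+}\ge \mathbb{P}(a\mid x)$ enter, and it deserves an explicit check rather than an appeal to analogy; but this is a matter of filling in a known construction, not a gap in the strategy.
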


\begin{proof}
We adopt the proof of Theorem 1 from \citet{frauen.2023a}.
\end{proof}

We now leverage Theorem~\ref{thm:gmsm} to obtain explicit solutions of our lower bounds and upper bounds.
\begin{corollary}(Bounds with hidden confounding)
\label{cor:bounds_gmsm}
When the decision function $\mathcal{D}$ induced by the GMSM $\mathcal{S}$ 
is monotone, we obtain sharp bounds, i.e.,
\begin{equation}
\begin{aligned}
\gQ_{\mathrm{LB}}^{-} &= \mathbb{E}_{\mathbb{P}_{-}}[\tilde{T} \mid x, a],\\
\gQ_{\mathrm{UB}}^{+} &= \mathbb{E}_{\mathbb{P}_{+}}[\tilde{T} \mid x, a] 
+ \gamma(x, a)\cdot \mathbb{P}_{+}(\Delta = 1 \mid x, a).
\end{aligned}
\end{equation}
\end{corollary}

\begin{proof}
We adopt the proof of Corollary 1 from \citet{frauen.2023a}.
\end{proof}

\section{Details for the synthetic datasets}
\label{app:data_generation}

\textbf{Data-generating process:} For our synthetic dataset, we simulate 
an observed confounder $X \sim \mathrm{Uniform}[10, 100]$ to mimic the 
patients' age, and a latent frailty variable $U \sim \mathcal{N}(0, 1)$, 
which represents unobserved disease severity and is not available to the 
learner. We define the propensity score in the randomized control trial 
as $\pi_a(x) = \mathbb{P}(A = 1 \mid X =x) = 0.5$, and in observational 
dataset experiments as $\pi_a(x) = \sigma\bigl(\frac{x-45}{45}\bigr)$. 
Next, we specify the treatment effect function $\tau(x, a)$:
\begin{itemize}
\item Exponential function: $\tau(x, a) = 20 \cdot \exp(a + 0.01x) + \varepsilon$
\item Sin function: $\tau(x, a) = \bigl(\sin\bigl(\tfrac{x-10}{90}\cdot 2\pi\bigr)+1.2\bigr)\cdot 10\cdot a + x + \varepsilon$
\item Logistic-sin function: $\tau(x, a) = \frac{30}{1+\exp(-0.1(x-50))} + 5\sin(0.2x) + 10 + \varepsilon$
\end{itemize}
where $\varepsilon \sim \mathcal{N}(0, 0.1)$. The survival time is 
\begin{equation}
T = \tau(x,a)\cdot a + \tfrac{1}{3}\bigl(\sin(12x)+x\bigr) 
+ \tfrac{1}{60}\cos(20x) - \beta_T \cdot U + \varepsilon,
\end{equation}
where $\beta_T = 1.0$, so that sicker patients (larger $U$) have shorter 
survival times. The censoring time is generated as
\begin{equation}
C = c_0 \cdot \exp\bigl(-\beta_C \cdot U + \eta\bigr), 
\quad \eta \sim \mathcal{N}(0, 0.1),
\end{equation}
where $\beta_C = 1.0$ and $c_0$ is chosen to achieve the target censoring rate $\xi \in \{0.2, 0.4, 0.6\}$. The observed outcome and censoring indicator are then
\begin{equation}
\tilde{T} = \min\{T, C\}, \qquad \Delta = \mathbbm{1}(C \leq T).
\end{equation}
By construction, $U$ simultaneously shortens $T$ and $C$, inducing 
$C \not\perp T \mid X, A$, which constitutes genuine informative 
censoring. We generate $2000$ samples for each setting and random seed.

\newpage
\section{Additional experiments with point estimators}
\label{app:drcut_comparison}

We implement the method from \citet{rubin.2007}, which extends the doubly robust censoring unbiased transformation to CATE estimation via a second-stage regression on the corresponding pseudo-outcomes. This baseline shares the same target estimand as our framework but assumes non-informative censoring, i.e., $C \perp T \mid X, A$, which is violated in our setting. We therefore use it as a point-estimation baseline to empirically demonstrate the bias induced by ignoring informative censoring. 

\begin{figure}[htbp]
    \includegraphics[width=\linewidth]{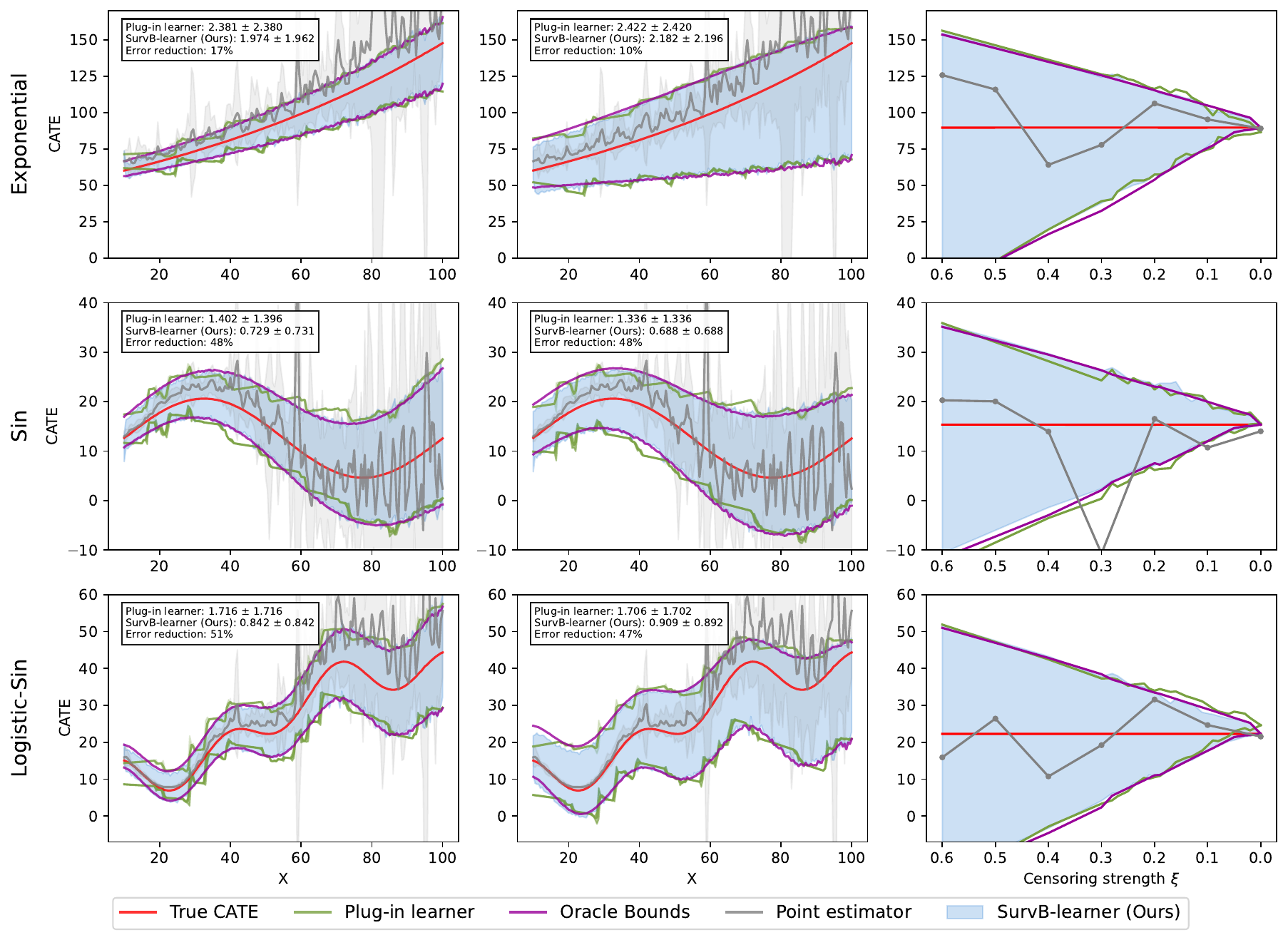}
    \caption{\textbf{Results for experiments with synthetic data including the point estimation baseline}.}
    \label{fig:drcut}
\end{figure}

Results are shown in Fig.~\ref{fig:drcut}. The comparison exposes a fundamental limitation of point estimation under informative censoring. As $\xi(x,a)$ grows (decided by the DGP in appendix~\ref{app:data_generation}), the point estimator increasingly deviates from the true CATE and eventually falls entirely outside our bounds, confirming that ignoring informative censoring produces invalid estimates in high-censoring regimes.

When $\xi(x,a)$ is small, the point estimator performs comparably to our \method, consistent with our theoretical result that the width of CAPO bounds $\gamma(x,a)\xi(x,a) \to 0$ as $\xi \to 0$, and our bounds recover the point estimation. And the  In this regime, even conservative bounds without domain knowledge are informative. Together, these observations confirm that our \method is strictly more general: it degrades to point estimation when censoring is weak, while remaining valid when censoring is strong and point estimation fails.

\newpage
\section{Implementation details}
\label{app:implementation_detail}
\subsection{Experiments with synthetic data}

For the synthetic experiments, we build on the implementation framework commonly used to evaluate meta-learners for point-identified CATE estimation~\citep{frauen.2025}. We employ the random forests~\citep{breiman.2001} for nuisance estimation in both the plug-in learner and our \method. This ensures that performance differences arise solely from the meta-learners rather than from model choice. All learners were implemented in Python 3.11.12 using scikit-learn 1.5.2. 

Random forests were fitted with the default hyperparameter settings and without additional tuning (i.e, $\text{n\_estimators} = 100$, $\text{max\_depth} = \text{None}$, and $\text{min\_samples\_leaf} = \text{2}$). For the implementation of the plug-in learner, we also use a random forest with default settings to estimate the nuisance functions, namely, $\nu(\delta, x, a)$ and $\xi(x, a)$. Then, we follow the Eq.~(\ref{eq:lower_bound}), Eq.~(\ref{eq:upper_gamma_bound}), and Eq.~(\ref{eq:upper_non_informative_bound}) in Section~\ref{sec:propose_bounds} directly to compute the bounds. For the implementation of our \method, we apply three-fold cross-fitting, and also apply the random forests using default settings in the second stage regression.

For Domain knowledge bounds, we set $\gamma(X, a)$ is uniformaly over covariates and treatments, which make sense, since for synthetic data we are comapre the accuracy of SurvB-learner, instead of complexity. For exponential function, $\gamma = 50$; sin function, $\gamma = 30$, logistic-sin function, $\gamma = 30$.

\textbf{Regarding the RMSE metric:} We define the RMSE in Table~\ref{tab:syn_exp_results} as the sum of the upper and lower 
bounds against their respective ground-truth values $\mu^{+}$ and $\mu^{-}$:
\begin{equation}
    \text{RMSE} = \sqrt{\frac{1}{N}\sum_{i=1}^{N}\left(\hat{\mu}^{\pm}_i 
    - \mu^{\pm}_i\right)^2}.
\end{equation}
That is, the reported RMSE measures how accurately the estimated bounds recover the true bounds, not the deviation from a single point estimate. 

\subsection{ADJUVANT dataset}

The ADJUVANT trial was designed to enroll patients with EGFR-mutant stage II--IIIA non-small cell lung cancer (NSCLC). However, in the available dataset, one patient was recorded as clinical stage I, and we therefore excluded this patient from our analysis.

Given the limited sample size of the ADJUVANT cohort ($N=170$), we adopt three-fold cross-fitting. In the first stage, we estimate nuisance functions using random forests, which are nonparametric ensembles of decision trees~\citep{breiman.2001}. In the second stage, we regress pseudo-outcomes on covariates and treatment with random forest regressors using the default settings to obtain the bound estimates. Following the ADJUVANT trial~\citep{zhong.2018, liu.2021}, we set $t_\mathrm{max} = 50$ months for DFS and OS time.

For the data-driven analysis in Figure~\ref{fig:results_adjuvant_1}~\textbf{(C)}, we adapt the idea of regression tree partitioning~\citep{zhang.2010}, where we replace the standard squared-error splitting rule by a customized criterion that maximizes the percentage of lower bounds (LBs) above zero within each leaf state in Eq.~\ref{eq:tree_criterion}. Hence, we have
\begin{equation}
\begin{aligned}
\label{eq:tree_criterion}
&p_L = \frac{1}{|M_L|} \sum_{i \in M_L} \mathbf{1}\{ \hat{\tau}^{-}_{\text{gefitinib, chemo}}(x_i) > 0 \}, \\
\qquad 
&p_R = \frac{1}{|M_R|} \sum_{i \in M_R} \mathbf{1}\{ \hat{\tau}^{-}_{\text{gefitinib, chemo}}(x_i) > 0 \},\\
&\text{Gain}(M_L, M_R) = \max \{ p_L, \; p_R \},
\end{aligned}
\end{equation}
where $M_L$, $M_R$ are the set of samples that fall into the left/right leaf node under a candidate split. We set the maximum tree depth to 2 and require at least 2 samples per leaf node.

\newpage
\section{Real-world dataset}
\label{app:adjuvant_results}
\begin{figure}[htbp]
\centering
    \includegraphics[width=1\linewidth]{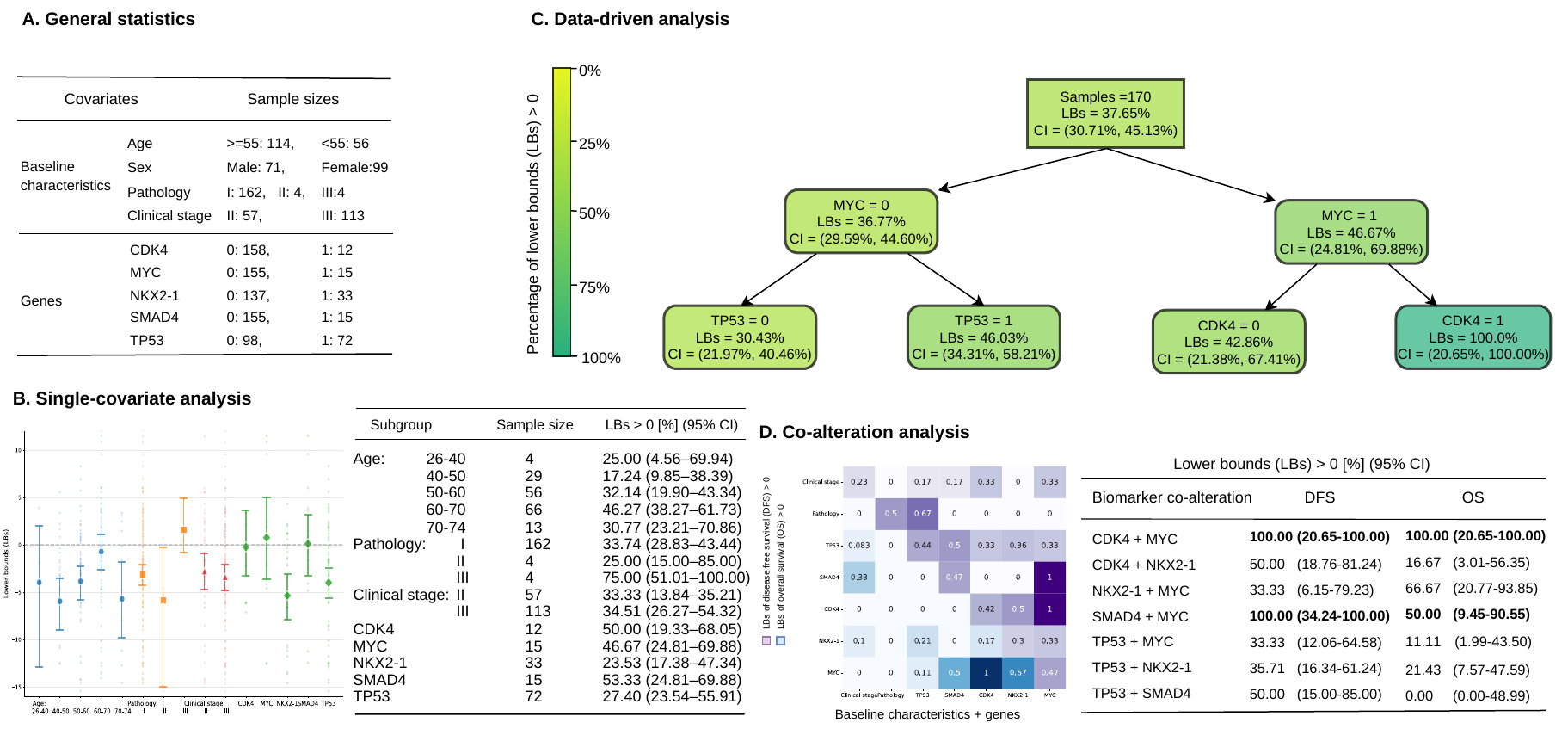}
    \caption{
    \footnotesize\textbf{Findings for the ADJUVANT trial.} The ADJUVANT trial~\citep{zhong.2018, liu.2021} assesses the effect of adjuvant gefitinib in $N=170$ patients with EGFR-mutant stage II--IIIA non-small cell lung cancer (NSCLC), yet the original study was subject to large censoring (37.06\% for DFS and 63.53\% for OS). Hence, our aim is to identify subgroups of patients that have a robust treatment benefit despite potential informative censoring. To do so, we report the proportion of patients where the lower bounds (LBs) of the CATE are strictly greater than zero (i.e., ``LBs $\ge$ 0''), expressed as percentages (\%). So, a value close to 100\% indicates that nearly all patients in this subgroup benefit from treatment, with a CATE above zero despite potential informative censoring.
    \textbf{(A)}~Patient characteristics: The table on the left reports the baseline characteristics and genetic alterations of the patient population.
    \textbf{(B)}~Single-covariate analysis: The forest plots show the mean and standard deviation of the estimated lower bounds of the CATE within subgroups defined by baseline characteristics or genetic covariates by bootstrap. For each subgroup on the x-axis, we performed $2000$ bootstrap replications. The background scatter points represent the CATE lower bound estimates without bootstrapping. 
    \textbf{(C)}~Data-driven analysis: Using a tailored recursive partition algorithm, we select subgroups with the highest percentage of LBs above zero. Method details are in Appendix~\ref{app:implementation_detail}. 
    \textbf{(D)}~Co-alteration analysis: The heatmap (left) reports the probabilities that the conservative lower bounds are above zero (``LBs $\ge$ 0'') for pairs of genetic biomarkers, while the table (right, sorted alphabetically) summarizes these probabilities for both disease-free survival (DFS) and overall survival (OS) for selected genetic co-alterations. Hence, our framework provides robust evidence of treatment benefit for certain subgroups, even for the efficacy outcome (OS), despite the original study not being able to confirm this, and in the presence of informative censoring that could otherwise bias the results. 
    }
    \label{fig:results_adjuvant_1}
\end{figure}

\end{document}